\newcites{A}{Appendix References}
\newcommand\blankpage{%
    \null
    \thispagestyle{empty}%
    \addtocounter{page}{-1}%
    \newpage}
\newtheorem{theorem}{Theorem}
\newtheorem{lemma}[theorem]{Lemma}
\newtheorem{proposition}{Proposition}
\newtheorem{definition}{Definition}
\crefname{section}{Sec.}{Secs.}
\Crefname{section}{Section}{Sections}
\Crefname{table}{Table}{Tables}
\crefname{table}{Tab.}{Tabs.}
\begin{document}


\title{Multidimensional Belief Quantification for Label-Efficient Meta-Learning}

\author{Deep Pandey $\quad$ Qi Yu\\
Rochester Institute of Technology\\
{\tt\small \{dp7972,qi.yu\}@rit.edu}}
\maketitle

\begin{abstract}
Optimization-based meta-learning offers a promising direction for few-shot learning that is essential for many real-world computer vision applications. However, learning from few samples introduces uncertainty, and quantifying model confidence for few-shot predictions is essential for many critical domains. Furthermore, few-shot tasks used in meta training are usually sampled randomly from a task distribution for an iterative model update, leading to high labeling costs and computational overhead in meta-training. We propose a novel uncertainty-aware task selection model for label efficient meta-learning. The proposed model formulates a multidimensional belief measure, which can quantify the known uncertainty and lower bound the unknown uncertainty of any given task. Our theoretical result establishes an important relationship between the conflicting belief and the incorrect belief. The theoretical result allows us to estimate the total uncertainty of a task, which provides a principled criterion for task selection. A novel multi-query task formulation is further developed to improve both the computational and labeling efficiency of meta-learning. Experiments conducted over multiple real-world few-shot image classification tasks demonstrate the effectiveness of the proposed model.
\end{abstract}

\section{Introduction}\vspace{-2mm}
\label{sec:intro}
Deep learning (DL) models have achieved state-of-the-art performance for many computer vision applications. 
However, the effectiveness of DL models is challenged by some specialized domains (\eg medicine, biology, and security intelligence), in which labeled data for model training may be scarce. Unlike the DL models, human beings can learn efficiently from limited training samples by using the prior knowledge stored in their brains and applying it to new tasks. For example, once a child learns how to distinguish between lions and tigers, s/he can quickly generalize the concept to distinguish lions and cats with little or no additional training. Inspired by such human learning, various few-shot learning techniques ~\cite{finn2017model,vinyals2016matching,munkhdalai2017meta} have been developed, which provide a promising approach to address the label scarcity problem for DL models.

In recent successful few-shot learning approaches, the model is trained from multiple few-shot tasks comprised of few labeled examples instead of one large dataset as in the traditional setting. By learning from many similar tasks, the model can accumulate the shared knowledge among tasks. After training, it uses the knowledge gained from similar tasks as the prior knowledge to perform well on new unseen few-shot tasks. Meta-learning is one popular approach for few-shot learning where the model learns at two stages: {\em rapid learning} within a new task, which is guided by prior knowledge gained from {\em gradual learning} across tasks \cite{santoro2016meta}. In meta-learning, the model is trained on a large number of few-shot tasks to learn the shared inter-task knowledge. The learned model is evaluated based on its generalization capabilities on unseen few-shot tasks. 

Few-shot tasks have limited data to learn from (in some cases just 1 example/class). So, some model predictions may not be reliable. For critical applications (\eg autonomous driving), it is essential to quantify the prediction uncertainty. Some existing approaches indirectly provide uncertainty information of few-shot tasks by learning a posterior predictive distribution for testing data samples \cite{grant2018recasting,gordon2018metalearning,ravi2018amortized,finn2018probabilistic,kim2019edge}. However, they usually suffer from a high computational cost and rely on assumptions/approximations that may be invalid in practice.

Additionally, few-shot tasks used in meta training are usually sampled randomly from a task distribution formed using a large pool of labeled data samples.  Thus, meta-training for many optimization-based meta-learning approaches is computationally expensive, requiring evaluation of the second-order derivative (\ie, Hessian) of the (global) model parameters over each of the sampled tasks. Furthermore, the large number of tasks leads to high labeling costs in many real-world problems. However, not all the tasks contribute equally to the learning of the (global) model parameters, and evaluating the Hessian over these tasks can significantly slow down the meta training process. 

In this paper, we present a {\em novel \textbf{Un}certa\textbf{i}nty-aware \textbf{t}ask \textbf{s}election model for efficient meta-learning} (referred to as Units-ML) that 
provides uncertainty estimation to quantify the model confidence in few-shot predictions. Building upon the theory of subjective logic~\cite{josang2016subjective}, we formulate a multidimensional belief measure, including vacuous, conflicting, and incorrect beliefs, which can quantify both the {\em known uncertainty (KUN)} and {\em unknown uncertainty (UUN)} of a given task. However, evaluating incorrect belief relies on the labels of a query set in a task, making the UUN not accessible during task selection. We address this issue by
proving a novel relationship between conflicting belief and incorrect belief, which allows us to bridge the gap to UUN.
A novel task selection function is designed accordingly that integrates both KUN and UUN for  {\em belief-oriented} label-efficient meta-learning. 

We summarize our key contributions below: (1) a novel computationally and label-efficient meta-learning model that can estimate uncertainty in few-shot tasks, (2) a multidimensional belief measure to quantify the KUN and lower bound the UUN of a given task, (3) theoretical justification that conflicting belief lower bounds incorrect belief, which allows UUN estimation without label information, and (4) an uncertainty-aware task selection criterion and a novel multi-query task formulation to improve both computation and label efficiency of meta-learning. We conduct intensive experiments over multiple real-world image datasets to demonstrate the effectiveness of the proposed Units-ML model in terms of accurate uncertainty estimation, computationally effective task selection, and label-efficient learning under a limited annotation budget. 

\section{Related Work}\vspace{-2mm}
In meta-learning, the meta-model aims to learn (prior) knowledge shared by relevant tasks over multiple training episodes so that the model can perform well in new few-shot tasks. The prior knowledge can be learned through embedding functions and similarity metrics as in metric-based models~\cite{vinyals2016matching,snell2017prototypical,garcia2018few,sung2018learning, chen2020new}. The prior knowledge can also be captured by a deep neural network that maps a training dataset to parameters of the task-specific meta-model as in model-based meta-learning approaches \cite{mishra2018a, garnelo2018conditional, kimanp18,Gordon2020Convolutional}.

In optimization-based meta-learning \cite{Sachin2017}, task-specific model parameters are learned from the meta-dataset using an optimization procedure such that the model can adapt quickly with only a few examples from a new task. In Model-Agnostic Meta-Learning (MAML) \cite{finn2017model}, a good global initialization is learned from which the model can adapt quickly to new tasks using only a few data samples using a few gradient descent steps.
Some improvements for MAML include MetaSGD \cite{li2017meta} and MAML++ \cite{antoniou2018how} that help further improve the generalization and stability of MAML. First-order approaches such as Reptile\cite{nichol2018first} have also been developed to address high computational costs in meta-training of MAML. Meta-learning has recently been extended to the Bayesian setting \cite{grant2018recasting, gordon2018metalearning,finn2018probabilistic,yoon2018bayesian} to develop uncertainty awareness. We discuss these relevant uncertainty-aware meta-learning works in the Appendix.

Recent works have also attempted to show the effectiveness of task selection for meta-learning in reinforcement learning problems \cite{NEURIPS2020_ec3183a7,kaddour2020probabilistic}. In terms of task selection for few-shot classification problems, MTL \cite{sun2019meta} and GCP  \cite{liu2020adaptive} share similar motivations to our approach. In MTL, a two-stage hard-task scheme is introduced where the model is first trained on a batch of tasks and a list of failure classes (based on query set loss) is maintained. In the second stage, the model trains from hard tasks created using the failure classes that lead to better generalization. In GCP, a class-pair-based task sampling scheme is developed as an effective alternative to existing uniform sampling for meta-learning. In GCP, the class-pair potential matrix is used to sample the training tasks. GCP requires keeping track of pairwise potential among all training classes, and might not scale well when the number of training classes is large, or when new training classes are introduced as training progresses. These approaches can be applied complementary to our approach as they focus on determining the most informative classes (instead of tasks as in our model) from which the task is to be generated. With these approaches, once the candidate classes are determined, our model could be applied to formulate a multi-query task and select the most informative task for effective meta-learning.

Our method is an instance of an optimization-based meta-learning with uncertainty awareness, \ie, our model outputs the uncertainty estimates along with the predictions for few-shot tasks. In contrast to the probabilistic meta-learning approaches discussed above, our method does not add any significant computation overhead. Also, by further leveraging the predicted multidimensional belief (\ie, vacuous, conflicting, and incorrect), we perform belief-oriented task selection for uncertainty-ware meta-learning with faster and better convergence, augmented with multidimensional belief based uncertainty quantification.

\section{Methodology}
 \vspace{-2mm}
In this section, we present the proposed belief-oriented task selection for efficient meta-learning (Units-ML). We first describe the standard problem setup for few-shot learning. We then provide an analysis on the computational cost of MAML that motivates the need to perform belief-oriented task selection that sets the stage for us to describe the proposed Units-ML model in detail. 

\vspace{1mm}\noindent{\bf Problem setup.} We focus on few-shot classification problems and follow the episodic training procedure introduced in \cite{vinyals2016matching}. In particular, multiple tasks sampled from a task distribution are considered where each task consists of a support set $S$ and a query set $Q$. Specifically, a task in a $N$-way $K$-shot classification problem is defined as
\begin{align}
\nonumber \mathcal{T} &= \{ S, Q \}\\    
\nonumber S = \{X_S, Y_S\} &= \{(\bm{x}_1,l_{1}), ..., (\bm{x}_{N_s},l_{N_s})\}\\
Q = \{X_Q, Y_Q\} &= \{(\bm{x}_1,l_1),.....,(\bm{x}_{N_q},l_{N_q})\}
\end{align}
Support set $S$ has a total of $N_s = N \times K$ instances with $K$ examples/class, and query set $Q$ has $N_q$ new examples belonging to one of the $N$ classes. During meta-training, both support and query sets are used to train the model; during meta-testing, the model performs adaptation using the support set and is evaluated on the query set.

Besides the standard problem setup as above, we also consider the setting where only limited samples can be annotated due to a limited labeling budget and the goal is to train the meta-model in a label-efficient way. We assume each task consists of small support set with limited labeled samples along with an unlabeled query set with varied sizes. We want the meta-model to be trained such that it can perform well on any new samples of the task (\ie, any query set) after learning from the knowledge of the support set. 
\paragraph{Analysis of MAML.}

MAML aims to learn a good initialization over multiple meta-iterations using the support-query setup discussed above. In each meta-iteration, a batch of tasks updates the model's global parameters. The updates in MAML can be summarized in two iterative steps: a local update using the support set and a global update using the query set. For each task, the local update proceeds as:
\begin{align}
   \nonumber \theta_0 &= \theta \ \ \textit{[make copy of global parameters]}\\
  \nonumber \theta_1 &= \theta_0 - \alpha \nabla_{\theta_0} \mathcal{L}\big[{f(\theta_0,X_S), Y_S}\big]\quad  \text{...}\\
    \theta_M &= \theta_{M-1} - \alpha \nabla_{\theta_{M-1}} \mathcal{L}\big[{f(\theta_{M-1},X_S), Y_S}\big] \label{taskAdaptation}
\end{align}
Here, the model $f$ outputs the predictions for support set input $X_S$ based on the parameters $\theta_m, m \in [1,M]$. The support set prediction $f(\theta_m, X_S)$ and support set ground truth $Y_S$ is used to compute the loss $\mathcal{L}$ and the $m^{th}$ 
 local update is done based on this loss. After $M$ local updates, the global parameters are updated using the query set input $X_Q$ and the query set ground truth $Y_Q$ as:
\begin{align}
    \theta_{\text{new}} &= \theta - \beta \nabla_\theta \mathcal{L}\big[{f(\theta_{M},X_Q), Y_Q } \big]
\end{align}
Denote the query set loss $\mathcal{L}\big[{f(\theta_{M},X_Q), Y_Q }\big]$ by $\mathcal{L}_{M}^Q$ and support set loss $L\big[{f(\theta_{M},X_S), Y_S }\big]$ by $\mathcal{L}_{M}^S$.
After $M$ local updates using the support set $S$, we update global parameters using the query set $Q$ as:
\begin{align}
  \nonumber  \theta_{\text{new}} &= \theta - \beta \nabla_{\theta} \mathcal{L}_M^Q = \theta - \beta \nabla_{\theta_M} \mathcal{L}_M^Q \times \nabla_\theta \big[\theta_M\big] \\
    &= \theta - \beta \nabla_{\theta_M} \mathcal{L}_M^Q \times \bigg(\prod_{m=M}^1 \nabla_{\theta_{m-1}} \big[\theta_{m}] \bigg)\times \nabla_\theta\big[\theta_0\big] \nonumber \\
    &= \theta - \beta\nabla_{\theta_M} \mathcal{L}_M^Q (I-H_{M-1})..\times (I-H_0) \times I
\label{MAMLglobalUpdate}
\end{align}
where $\nabla_{\theta_M} \mathcal{L}_M^Q$ is a vector of length same as the number of parameters in $\theta$, $I$ is the identity matrix, and $H_{m} = \nabla_{\theta_{m}} \big[ \nabla_{\theta_{m}} \mathcal{L}_{m}^S\big]$ is the Hessian matrix. 
As shown above, the global parameter update is through the loss over the query set samples $\mathcal{L}_M^Q$, with $\theta_m$ implicitly capturing the support set information. To achieve label-efficient meta-learning, we need to quantify the informativeness of a task through its query set. Furthermore, global parameter update involves multiple Hessian-gradient products, which are computationally expensive. In standard meta-learning, a large number of tasks need to be labeled and then used for episodic training to find good global parameters. This not only incurs a high annotation cost but also takes a long time to converge. The proposed Units-ML model aims to {\em select the most informative tasks} for efficient meta-learning to reduce both the label and computational cost. 

\subsection{Multidimensional Task Belief Quantification}
We formulate a novel multidimensional belief-based measure to quantify different types of task uncertainty in meta-learning by leveraging the formalism from Subjective Logic (SL)~\cite{josang2016subjective}. 
SL considers $N$ opinions (corresponding to $N$ classes) and assigns belief masses to each opinion ($b_1, b_2, ... b_N$) along with an overall uncertainty mass $u$. The belief masses represent the total evidence from the model whereas the uncertainty mass represents the vacuity (i.e., {\em lack of evidence}), and the two masses sum to 1: 
\begin{align}
\sum_{n=1}^N b_n + u = 1, \ \forall{n}: \ 0 \leq b_n \leq 1, \ 0 \leq u \leq 1    
\end{align}
By explicitly considering the uncertainty mass and using evidence-based measure (vacuity) to quantify it, we can obtain the model's {\em vacuous belief} on given tasks. By learning tasks with a high vacuity, the model can gain the lacking knowledge. Furthermore, we can also capture the uncertainty due to the {\em conflicting belief} using the dissonance ($dis$) that is complementary to the vacuity ($u$):
\begin{align}
\label{eq:dis}
    &dis = \sum_{n=1}^N \Big( b_n \frac{ \sum_{j\neq n} b_j Bal(b_j, b_n)}{\sum_{j\neq n} b_j } \Big),\\ 
    &Bal(b_j, b_n) = 
\begin{cases}
  1 - \frac{|b_j - b_n|}{b_j+ b_n}, & \text{if } b_i b_j > 0\\
  0, & \text{otherwise}
\end{cases}
\end{align}
where $Bal(\cdot, \cdot)$ is the relative mass balance function between two belief masses. 
By learning tasks with a high dissonance, the model can correct its acquired conflicting knowledge to ensure more accurate predictions. Additional discussions about SL is presented in the Appendix.

The theory of SL can be conveniently embedded in a standard (non-Bayesian) neural network, making it computationally attractive. In particular, a neural network can form multinomial opinions in classification by replacing the final softmax layer with a non-negative activation layer~\cite{sensoy2018evidential}. As a result, the network is trained to predict an evidence vector ${\bm e}$ = $(e_{1}, e_{2},...e_{N})$ for a given input ${\bm x}$. 
The belief and vacuity are then computed as 
\begin{align}
\label{eq:vac}
b_n = \frac{e_n}{S}, \quad u = \frac{N}{S}, \quad \text{where~} S = \sum_{n=1}^{N}(e_{n} + 1),
\end{align} 
By setting $\alpha_n=e_{n} + 1$, the probability of assigning ${\bm x}$ to the $n$-th class is $\frac{\alpha_n}{S}$. If we use a set of categorical random variables $(p_1,...,p_N)^\top$ to represent the class assignment probabilities, then $\alpha$'s are essentially the concentration parameters of a Dirichlet prior $\text{Dir}[(p_1,...,p_N)^\top|(\alpha_1,...,\alpha_N)^\top]$.   

\paragraph{Multidimensional Belief for Task Uncertainty Quantification.} 
To quantify the vacuous belief (\ie, vacuity) and conflicting belief (\ie, dissonance) for a given task $t$ that consists of a support set with limited labeled instances along with an unlabeled query set $q$, we propose to perform meta-testing on each of the sampled tasks. In particular, the model first adapts to the task by using the support set $S$. Then, task vacuity and dissonance are evaluated using the unlabeled query set $Q$. We compute the vacuity and dissonance using \eqref{eq:vac} and \eqref{eq:dis} for each data sample in the query set of a task. The {\em vacuous belief} $vb^t$ and {\em conflicting belief } $cb^t$ of task $t$ are computed as the average of vacuity and dissonance of query set samples: 
\begin{align}
\label{eq:tv}    vb^t =& \frac{1}{N_q} \sum_{q=1}^{N_q} u_{q}^t & \text{\bf (Vacuous Belief)}\\
\label{eq:tdis}    cb^t =& \frac{1}{N_q} \sum_{q=1}^{N_q} {dis}^t_{q} & \text{\bf (Conflicting Belief)}
\end{align}
where $u_{q}^t$ and $dis_{q}^t$ are the vacuity and the dissonance for the $q^{th}$ query sample in task $t$. Since vacuity reflects the model's lack of evidence on a data sample, vacuous belief indicates the model's overall lack of knowledge on a task. 
Therefore, selecting a task with a high $vb^t$ and performing meta-training using \eqref{MAMLglobalUpdate} can adjust the global parameters to effectively learn the missing knowledge. As a result, the model is expected to perform well on similar unseen few-shot tasks in the future. While vacuous belief captures one source of uncertainty that indicates the model's lack of knowledge for the task, conflicting belief helps identify the difficult tasks, \ie, the tasks in which the model gets confused between different classes.
Learning from these tasks can help adjust the global parameters such that the model can correct the acquired confusing knowledge. As a result, the model is expected to be able to better differentiate different classes within a task. 

Since both vacuous belief and conflicting belief can be quantified without knowing the labels of the query set, they are instances of {\em known uncertainty (KUN)}. There is another source of uncertainty, referred to as {\em unknown uncertainty (UUN)}, that the model is unaware of. UUN usually leads to a highly confident wrong prediction that can cause more severe consequences in critical domains (\eg, autonomous driving). This type of uncertainty is essentially caused by model overfitting, which can be quite common when applying deep learning models to few-shot problems. It is critical to train the meta-learning model to minimize the unknown uncertainty so that the model can avoid making over-fitted predictions in the future. UUN can be captured by a third type of belief, referred to as {\em incorrect belief:} 
\begin{align} \label{eq:incorrectBelief}
ib^t = \frac{1}{N_q} \sum_{q=1}^{N_q} ||\bm{b}_{q}^t \odot (\bm{1} - \bm{y}_{q}^t)||_1 \; \;\text{(\bf{Incorrect Belief})}
\end{align}
where $\bm{y}_{q}^t =  (y_{q,1}^t, ... y_{q,N}^t)^T$ is the one-hot vector representing the ground-truth label of the $q^{th}$ query set sample, ${\bm b}_{q}^t =(b_{q,1}^t, ... b_{q,N}^t)^T$ is the $N$-dimensional belief vector,  $\odot$ represents element-wise multiplication, and $||\cdot||_1$ is the $l_1$ norm. Intuitively, when the model is wrongly confident, \ie, the model places a strong belief in a class that is different from the true class label, it will contribute a large incorrect belief component. The task-level incorrect belief aggregates these components to reflect the overall UUN of the task. 

However, a key limitation of incorrect belief is that computing incorrect belief requires the query set labels, making it infeasible to be used in task selection for label-efficient meta-learning. We address this issue through an important theoretical result as presented in the following theorem. The theorem establishes an important relationship between incorrect belief and conflicting belief, which essentially bridges the gap between unknown uncertainty and known uncertainty. 

\begin{theorem}[\textbf{Lower bound of incorrect belief}]
\label{pythagorean}
Consider an unlabeled task $t$ with (unknown) incorrect belief of $ib^{t}$ and conflicting belief $cb^{t}$. Then, incorrect belief is lower bounded by a half of the conflicting belief on the same task
\begin{align}
\label{eq:disIncBelreln}
     ib^{t} \ge \frac{1}{2} cb^t \quad \text{where}\quad 0 \leq cb^{t} \leq 1,  0 \leq ib^{t} \leq 1
\end{align}
\end{theorem}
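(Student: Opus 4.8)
The plan is to reduce to a single query sample and then split on whether the true class is ``dominant.'' Since $cb^{t}$ in \eqref{eq:tdis} and $ib^{t}$ in \eqref{eq:incorrectBelief} are query-set averages, it suffices to prove the pointwise inequality $\|\bm{b}\odot(\bm{1}-\bm{y})\|_1\ge\tfrac12\,dis(\bm{b})$ for one query sample; write $\bm{b}=(b_1,\dots,b_N)$ for its belief vector, $c$ for its true class, $\beta=\sum_{n\neq c}b_n=\|\bm{b}\odot(\bm{1}-\bm{y})\|_1$, and $d=dis(\bm{b})$ (if $\beta=0$ then all off-class beliefs vanish, $d=0$, and the bound is trivial, so assume $\beta>0$; the stated range conditions are then automatic since $d=\sum_n b_n\,\omega_n\le\sum_n b_n\le1$ and $\beta\le1$). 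Using $Bal(b_j,b_n)=\frac{2\min(b_j,b_n)}{b_j+b_n}\in[0,1]$, rewrite $d=\sum_{n}b_n\,\omega_n$ with $\omega_n=\big(\sum_{j\neq n}b_j\,Bal(b_j,b_n)\big)/\big(\sum_{j\neq n}b_j\big)$, a weighted average of quantities in $[0,1]$, so $\omega_n\in[0,1]$. Peeling off the true-class term, $d=b_c\omega_c+\sum_{n\neq c}b_n\omega_n\le b_c\omega_c+\beta$, so everything reduces to showing $b_c\omega_c\le\beta$.

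If $b_c\le\beta$ this is immediate from $\omega_c\le1$, and $d\le 2\beta$ follows. This already disposes of every sample whose true class is not the strictly largest belief component: if $b_j\ge b_c$ for some $j\neq c$ then $\beta\ge b_j\ge b_c$.

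It remains to treat $b_c>\beta$, i.e., the true class carries more belief than all others combined, so $b_c>b_j$ for every $j\neq c$; then $\min(b_j,b_c)=b_j$ and $b_c\omega_c=\frac{2b_c}{\beta}\sum_{j\neq c}\frac{b_j^{2}}{b_j+b_c}$. The function $g(t)=\frac{t^{2}}{t+b_c}=t-b_c+\frac{b_c^{2}}{t+b_c}$ is convex on $[0,\infty)$ with $g(0)=0$, so the chord inequality $g(t)\le\frac{t}{\beta}g(\beta)$ holds on $[0,\beta]$; summing over $j\neq c$ (recall $\sum_{j\neq c}b_j=\beta$) gives $\sum_{j\neq c}\frac{b_j^{2}}{b_j+b_c}\le\frac{\beta^{2}}{\beta+b_c}$, hence $b_c\omega_c\le\frac{2b_c\beta}{\beta+b_c}$.

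This last bound is not enough by itself---combined only with $\sum_{n\neq c}b_n\omega_n\le\beta$ it gives $d\le\beta+\frac{2b_c\beta}{\beta+b_c}>2\beta$ when $b_c>\beta$---so in this case the companion sum must also be tightened, to $\sum_{n\neq c}b_n\omega_n\le\frac{2\beta^{2}}{\beta+b_c}$; the two sharp bounds then add to exactly $d\le\frac{2\beta(b_c+\beta)}{\beta+b_c}=2\beta$, with equality precisely when the residual mass $\beta$ sits on a single class (a two-component opinion, for which $d=2\min(b_c,\beta)=2\beta$). To obtain the companion bound one expands each $\omega_n$ with $n\neq c$, separates its $j=c$ contribution $\frac{2b_cb_n}{b_c+b_n}$ (valid since $b_c$ dominates), uses $\sum_{k\neq n}b_k\ge b_c$ to bound every denominator, and controls the remaining mutual-conflict terms $\sum_{j\neq n,c}b_j\,Bal(b_j,b_n)$ before re-using the convexity of $g$. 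I expect this final step---showing the conflict among the non-true classes is small enough to leave room for the $b_c\omega_c$ term---to be the main obstacle; the cleanest framing is to prove that, with $b_c$ held fixed, $d$ is maximized when all remaining belief is concentrated on one class, which collapses the dominant case to the two-component computation already carried out.
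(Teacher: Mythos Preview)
There is a genuine gap. In the dominant case $b_c>\beta$ you reduce everything to the ``companion bound'' $\sum_{n\neq c}b_n\omega_n\le\frac{2\beta^{2}}{\beta+b_c}$, but you do not prove it: you explicitly flag this step as ``the main obstacle'' and only sketch a plan (separate the $j=c$ contribution, bound denominators by $b_c$, control the mutual conflict among non-true classes). The alternative framing you propose---that $d$ is maximized, with $b_c$ held fixed, by concentrating all residual mass on a single class---is also left as an assertion. Until one of these is established, the argument is incomplete precisely in the case that matters (the easy case $b_c\le\beta$ was already one line).

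The paper closes this gap by a different and structurally simpler route that avoids the case split on the true class entirely. It proves the single inequality
\[
cb\ \le\ 2\bigl(S_b-b_{\max}\bigr),\qquad b_{\max}=\max_n b_n,
\]
which depends only on the belief vector, not on which class is correct; since $ib=S_b-b_{\mathrm{cor}}\ge S_b-b_{\max}$ trivially, the theorem follows. The proof of $cb\le 2(S_b-b_{\max})$ orders the beliefs $b_1\ge\cdots\ge b_N$, writes $Bal(b_i,b_j)=\frac{2b_{\max(i,j)}}{b_i+b_j}$, and regroups the double sum in $cb$ by the smaller of the two masses: for each $n\ge2$ one collects the two contributions from the pair $\{i,n\}$ with $i<n$, which share a factor $b_n^{2}$. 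Using only $2b_n\le b_i+b_n$ and an elementary fraction manipulation, the total of all $b_n^{2}$-terms is shown to be at most $2b_n$; summing over $n\ge2$ gives $cb\le 2\sum_{n\ge2}b_n=2(S_b-b_{\max})$.

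Observe that in your dominant case $b_c=b_{\max}$ and $\beta=S_b-b_{\max}$, so the paper's inequality is exactly the target $cb\le 2\beta$; the regrouping-by-$\min$ argument therefore replaces both your convexity estimate on $b_c\omega_c$ and the missing companion bound in one stroke. Your easy case is then subsumed as well, since $cb\le 2(S_b-b_{\max})\le 2(S_b-b_{\mathrm{cor}})=2\beta$ whenever $b_{\mathrm{cor}}\le b_{\max}$, which is always.
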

\begin{proof}[Proof sketch]
We first consider a sample within the task for which the model outputs an $N$-dimensional belief vector. We consider the analytical expression of the conflicting belief, simplify the relative mass balance between the beliefs, and expand the different belief terms of the conflicting belief. After expanding and rearranging, we find an upper bound for each of the terms in the conflicting belief expression that proves that for any sample, the incorrect belief is lower bounded by half the conflicting belief. Finally, we generalize the relationship to be true for any task. 
\end{proof}
Due to space limitations, the complete proof of the theorem is provided in the Appendix. Ideally, we want to select tasks with high incorrect belief that encourages the model to correct the model's incorrect knowledge. Since the conflicting belief provides a lower bound for incorrect belief, it provides a way to estimate the incorrect belief (and reduce UUN) without the label information, which is instrumental for active task selection.

\begin{figure*}[t!]
        \includegraphics[width=0.90\textwidth]{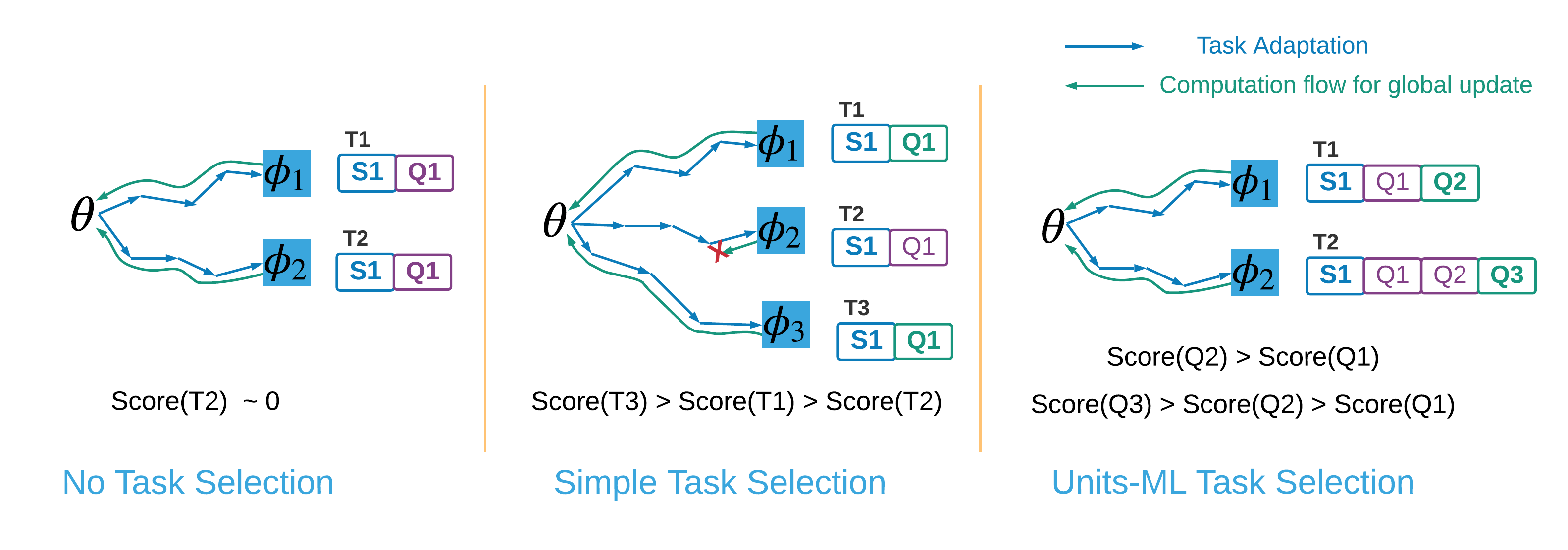}
        \centering
        \vspace{-4mm}
        \caption{The left figure shows the idea of optimization-based meta-learning, the middle figure shows the idea of task selection (Units-ST), and the right figure illustrates the idea of our proposed task selection method with multi-query tasks. The forward arrows indicate the task adaptation \eqref{taskAdaptation}, and the backward arrows represent the computations for global parameter update  \eqref{MAMLglobalUpdate}.
        }
        \label{fig:repIdeaFig}
        \vspace{-2mm}
\end{figure*}
\subsection{Belief-Oriented Task Selection and Training}
The multidimensional belief provides a principled way to quantify both the KUN and UUN of different tasks without the query set labels. With this, the most informative tasks are the ones with the greatest overall uncertainty, including both KUN and UUN. For the former, it is captured by two different types of belief: vacuous and conflicting. As for the latter, we can obtain its lower bound through conflicting belief. Since conflicting belief is used to quantify both KUN and UUN, we propose a task selection function that integrates vacuous beliefs (vacuity) and conflicting beliefs (dissonance) to estimate the total task uncertainty: 
\begin{align}
\label{eq:score}
    \text{unc}^t = \lambda (vb^t) + (1 - \lambda) (cb^t) \ \ \text{(\bf{Task Uncertainty})}
\end{align}
where $\lambda$ is a balancing term that determines the relative importance between these two types of belief. 
Intuitively, the tasks with high vacuous belief represent new/unseen tasks on which the model is not able to make confident predictions, whereas the tasks with high conflicting belief represent challenging tasks on which the model struggles to confidently discriminate among the classes. We start with a relatively large $\lambda$ in the early phase of meta-learning so that the model can better explore the task space to fill out the knowledge gap. Then, the focus will shift to the conflicting belief to fine-tune the model on the more difficult tasks or tasks that the model has an incorrect knowledge. 

\paragraph{Multi-Query Tasks.} With the above selection score ($\text{unc}^t)$, we propose to conduct uncertainty-aware task selection with a novel task formulation strategy for label-efficient meta-learning. A straightforward method for task selection is to sample a large number of tasks (say $J$ tasks) from a task distribution  $p(\mathcal{T})$ and use task selection criteria to select $I$ tasks to be labeled and perform meta-learning. We refer to this strategy as Units-ST (see Figure~\ref{fig:repIdeaFig}). 
In Units-ST, for each discarded task, the model needs to adapt to the support set to determine the informativeness which is a waste of computation and support set labels. To further improve efficiency, we propose to formulate {\em multi-query} tasks, where each task consists of a shared support set and multiple query sets. In this new formulation, referred to as Units-ML (see Figure~\ref{fig:repIdeaFig}), the model will adapt to the support set in the task and choose the most informative query set to label. Other unlabeled query sets will be discarded and not used for meta-learning. Such Multi-Query Tasks can be an ideal choice for limited budget real-world few-shot problems. 

\paragraph{Belief Regularized Model Training.}
We aim to train the meta-model to learn a good initialization such that for a new task, after learning from limited data of the support set, the meta-model can make a prediction as well as output the confidence in the prediction (the uncertainty information). To this end, we assume that the label for each sample is obtained from a generative process with a Dirichlet prior and a multinomial likelihood as specified through the SL framework. The parameters for the Dirichlet prior express the vacuity and belief masses for uncertainty estimation. Further, we leverage the conjugacy between the Dirichlet prior and the multinomial likelihood. With this, we can learn these parameters by minimizing a loss between the multinomial output and the ground truth labels.

Additionally, while the incorrect belief can only be estimated through its lower bound during the task selection phase, once the task is selected, the labels of its query set will be collected. Consequently, the incorrect belief can be accurately quantified, which can be used to guide the model training (to minimize the incorrect belief). To this end, we propose a belief regularized loss function
\begin{align}
    \label{eq:theOverallLossMain}
    \mathcal{L}_i &= - \ln \int \text{Mult}({\bm y}_i|{\bm p}_i) \text{Dir}({\bm p}_i|\boldsymbol{\alpha}_i) \text{d} {\bm p}_i+ \eta {R}_{ib}\\
    {R}_{ib} &=  \mathbf{b}_i \odot (\mathbf{1}-\mathbf{y}_i)
\end{align}
where $\mathcal{L}_i$ is the loss on the $i$-th data sample with one hot label $\mathbf{y}_i$, ${R}_{ib}$ is the incorrect belief regularization for the sample, and $\eta$ is a regularization coefficient that balances between minimizing the incorrect belief and maximizing the log likelihood. Moreover, the model outputs $N-$dimensional evidence $\mathbf{e}_i$ from which the belief $\mathbf{b}_i$ and dirichlet parameters $\boldsymbol{\alpha}_i$ are obtained. Limited by space, we present descriptions of SL and the additional details about the incorrect belief regularization, loss function, design choices, and hyperparameter settings in the Appendix.

\section{Experiments}\vspace{-2mm}
We first carry out experiments to show the accurate multidimensional belief quantification, which empirically validates our theoretical results. We then conduct intensive experiments on real-world few-shot image classification tasks to demonstrate the effectiveness of the Units-ML model on (1) accurate uncertainty estimation for few-shot learning, (2) fast convergence with limited label budget, and (3) competitive meta-learning performance in terms of generalization of the learned model and flexibility of adjusting prediction results based on model confidence. To demonstrate the general applicability of the proposed model, we extend the MetaSGD models to make them uncertainty-aware and also conduct additional experiments on any-shot classification using \textit{mini}-ImageNet/CifarFS and in multi-dataset setting on Meta-Dataset \cite{DBLP:conf/iclr/TriantafillouZD20} as proposed by Bayesian TAML \cite{lee2020learning}. Limited by space, we report these results along with a detailed experiment setup in the Appendix.

{\bf Datasets.} We evaluate our proposed method on three real-world benchmark image datasets: Omniglot \cite{lake2015human}, \textit{mini}-ImageNet \cite{vinyals2016matching}, and CifarFS \cite{bertinetto2018metalearning}. Details of the datasets are summarized in Table \ref{tab:datasetDetails} of the Appendix. 
\subsection{Details of Comparison Baselines}

Our comparison includes optimization-based meta-learning models (MAML~\cite{finn2017model},
    MUMOMAML~\cite{vuorio2019multimodal},  
    CAVIA~\cite{zintgraf2018cavia},
    MetaSGD~\cite{li2017meta},
    Reptile~\cite{nichol2018first},  HSML~\cite{yao2019hierarchically}
    ) and models with uncertainty quantification capabilities (PLATIPUS \cite{finn2018probabilistic}, VERSA \cite{gordon2018metalearning}, BMAML \cite{yoon2018bayesian}, LLAMA \cite{grant2018recasting}, and ABML \cite{ravi2018amortized}).
    
A description of each comparison baseline is provided in the Appendix.
As some of these models do not release their source codes, we refer to the existing literature to report their performance. Thus, the results may not be available for all three datasets for some models. 

\vspace{1mm}\noindent{\bf Experiment setup.} We experiment with few-shot classification problems, where we consider $N$-way $K$-shot tasks with $q$ instances/class in the query set. In such a setting, tasks are created by randomly sampling $N$ classes and then sampling $K+q$ instances from each class. The $N\times K$ instances make the support set of the task, and there are $N\times q$ instances in the query set. 
The models are trained using Adam optimizer with the outer loop learning rate of 0.001 and evaluated on 600 validation set tasks. We train the model for 100 epochs where each epoch consists of 500 meta-iterations, and average the final test set performance across 3 independent runs.   In each meta-iteration, the model is trained with 8 tasks for Omniglot, 4 tasks for CifarFS \cite{bertinetto2018metalearning}, 4 tasks for 5-way 1-shot \textit{mini}-ImageNet, and 2 tasks for 5-way 5-shot \textit{mini}-ImageNet. We use the same standard 4-module convolutional architecture similar to ALFA \cite{NEURIPS2020_ee89223a}, Antoniou et al. \cite{antoniou2018how}. For limited labeling budget experiments, we train the models for 50 epochs with 100 meta-iterations where each meta-iteration consists of 2 tasks/batch (total of 10,000 tasks) for all the models. In the multi-query tasks, each task has 8 unlabeled query sets sharing the same support set. Additional implementation details are provided in the Appendix.

\subsection{Multidimensional Belief Quantification}

\begin{figure}[htpb]
\vspace{-3mm}
\centering
\begin{subfigure}{0.23\textwidth}
  \centering
  \includegraphics[width=\linewidth]{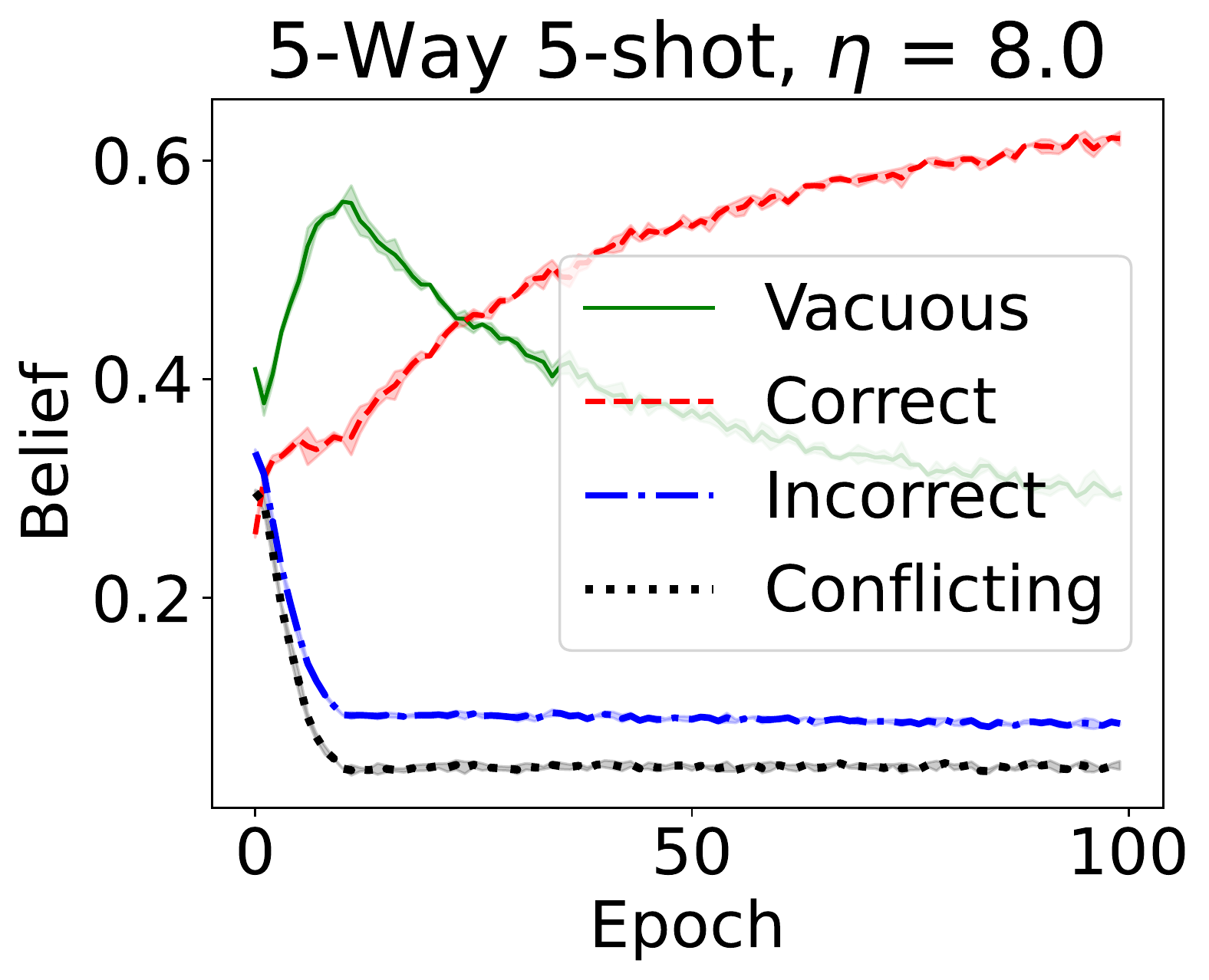}
  \caption{Training Trend}
\end{subfigure}
\begin{subfigure}{0.23\textwidth}
  \centering
  \includegraphics[width=\linewidth]{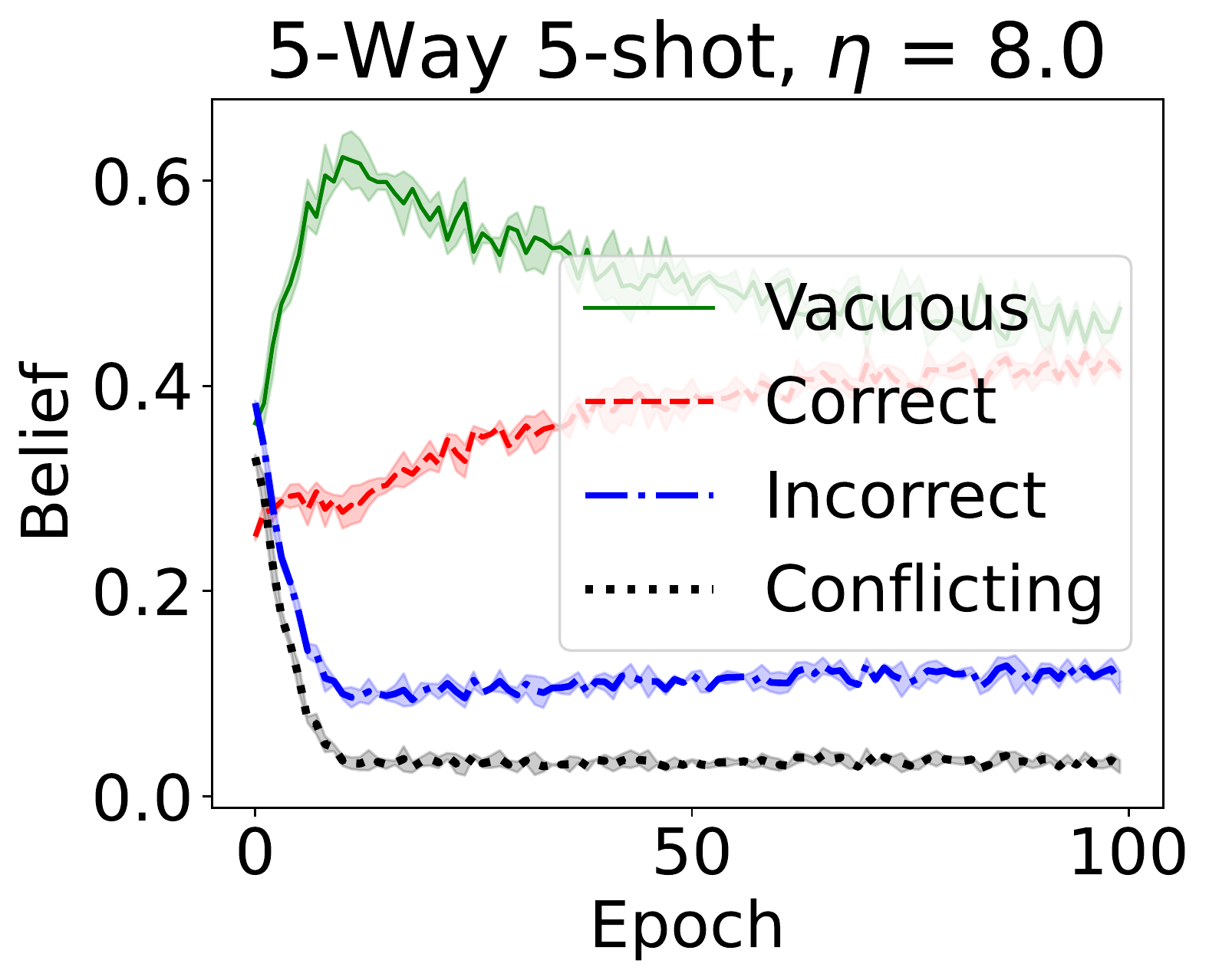}
  \caption{Validation Trend}
\end{subfigure} 
\caption{Multidimensional belief trend for CifarFS dataset
}
\label{fig:multibelief}
\vspace{-2mm}
\end{figure} 
We present the training and validation trends of the multidimensional belief for 5-way 5-shot CifarFS dataset in Figure~\ref{fig:multibelief}. There is low average vacuous belief and high incorrect belief at the initial training phase. This is most likely due to the model's overfitting on the limited training data, which could also indicate the importance of applying the proposed multidimensional belief to quantify both KUN and UUN under a limited labeling budget when overfitting is more likely to occur. In the early phase, since the model knows less, it also under-estimates the vacuous belief. In the immediate next few epochs, the model starts to make an accurate adjustment to its multidimensional beliefs as it is exposed to more samples. This leads to an increase in the correct belief and a decrease in all other beliefs as desired. It should be noted that the conflicting belief closely trails the incorrect belief in both the training and test tasks in all training phases. Thus, the figures empirically validate our theorem that the conflicting belief lower bounds twice the incorrect belief.

\subsection{Uncertainty Estimation Performance}
We then carry out experiments to assess the effectiveness of the proposed Units-ML model for uncertainty estimation in few-shot learning. For few-shot samples that remain new to the meta-model after being adapted to the samples in the support set, the model is expected to report a high vacuity; otherwise, the predicted vacuity should be low, reflecting high model confidence. Figure~\ref{repFig1} shows the predicted vacuity for a query set in a 5-way 1-shot Omniglot test task. As the query set images are rotated (by angles from 0$^{\circ}$ to 90$^{\circ}$ as indicated by R), the model starts to make mistakes (indicated by red) and becomes more uncertain in its predictions (as indicated by vacuity). Furthermore, when tested using MNIST characters as out-of-distribution (OOD) samples, the model accurately outputs a large vacuity, which shows its potential for OOD detection in few-shot learning. Figure~\ref{oodopensetmini} shows our model's performance on a 5-way 4-shot ImageNet test task with 3 in-distribution images, 2 open-set/OOD images (a cake image from a different \textit{mini}-ImageNet class, and a bird image from the CUB dataset \cite{WelinderEtal2010}) in the query set. For in-distribution samples, the model prediction is correct, and the prediction confidence (indicated by the vacuity and dissonance) is reasonable. For the OOD/open-set images, the model outputs high vacuity showing our model's potential in OOD/open-set detection. Furthermore, all the belief mass for OOD samples contributes to the incorrect belief (\ie, there is no correct belief for OOD samples). Our model outputs low dissonance for confidently correct predictions and comparably higher conflicting belief (the dissonance) for confusing samples, a highly desirable characteristic of a model with accurate uncertainty awareness. Additional illustrative examples and comparisons demonstrating our model's potential in open-set/OOD detection, and the model training process is provided in the Appendix.

\begin{figure}[t!]
        \includegraphics[width=0.45\textwidth]{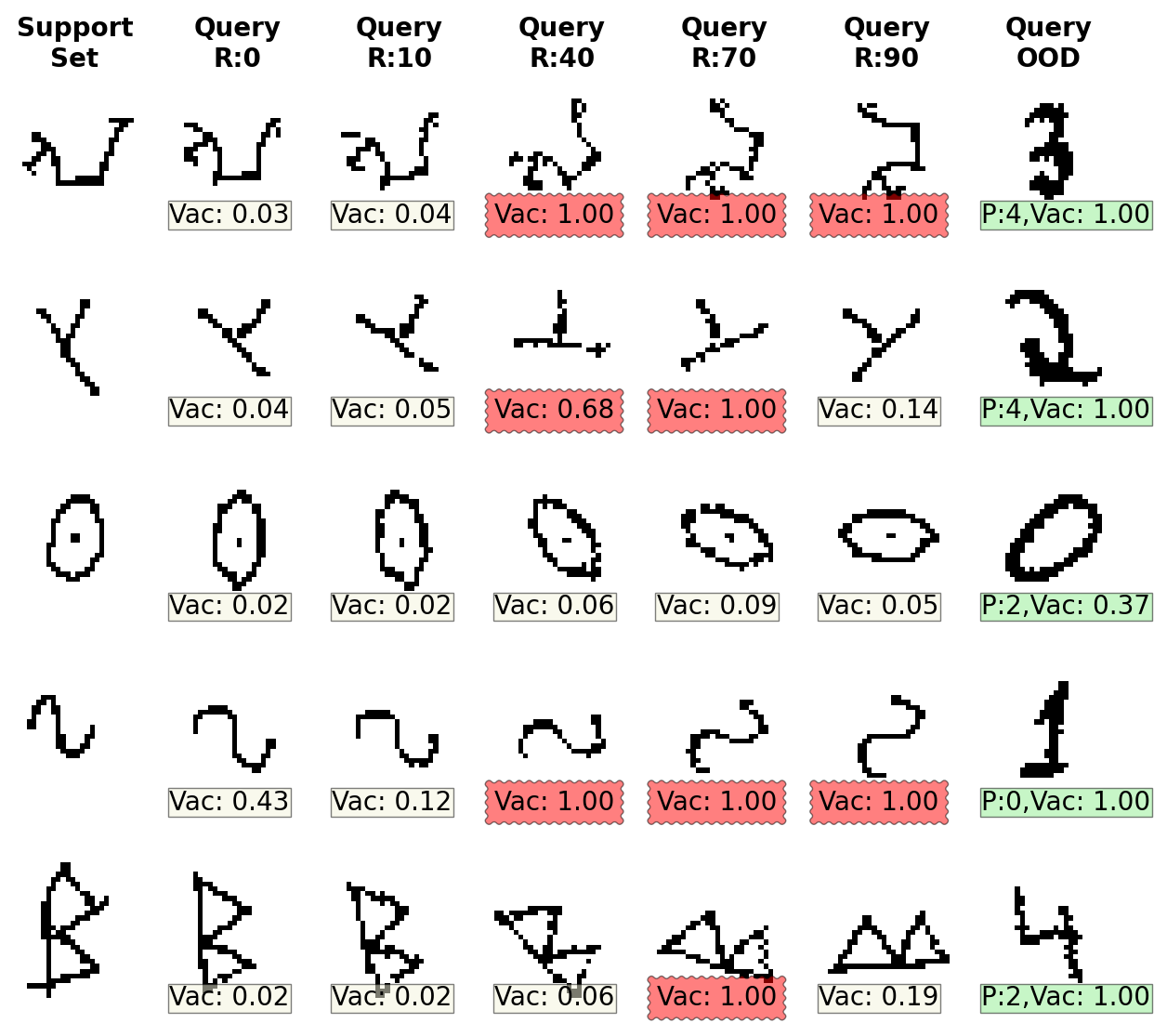}
        \centering
        \caption{Uncertainty prediction in a 5-w 1-s task
        }
        \label{repFig1}
        \vspace{-4mm}
\end{figure}
\begin{figure}[t!]
        \includegraphics[width=0.48\textwidth]{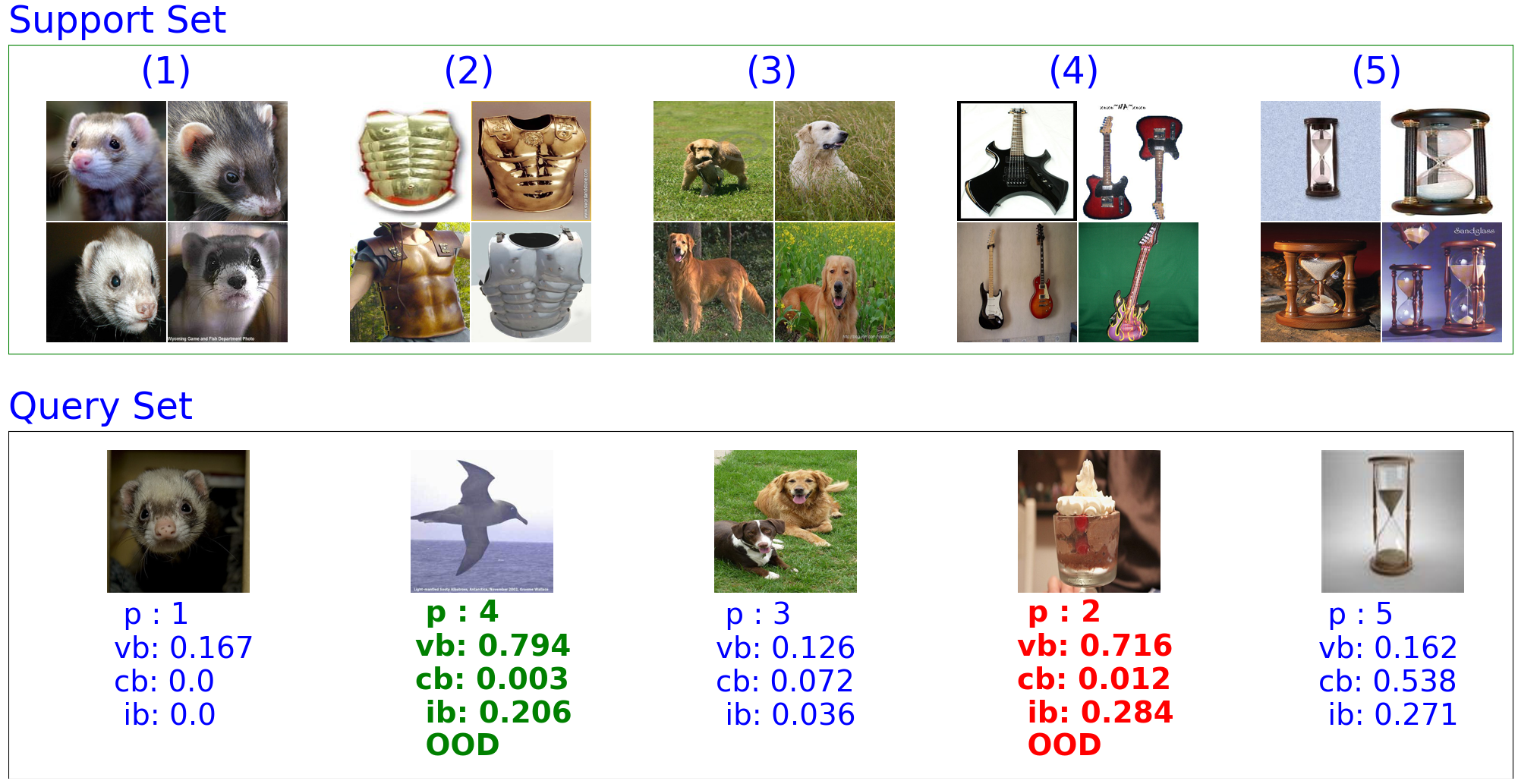}
        \centering
        \caption{Uncertainty prediction in a 5-w 4-s \textit{mini}-ImageNet test task with ood/open-set instances in query set. vb, cb, and ib represent the vacuous belief, conflicting belief, and incorrect belief, respectively for the prediction p.
        }
        \label{oodopensetmini}
\end{figure}
\begin{figure}[htpb] 
\vspace{-2mm}
\centering
\begin{subfigure}{0.23\textwidth}
  \centering
    \includegraphics[width=0.9\linewidth]{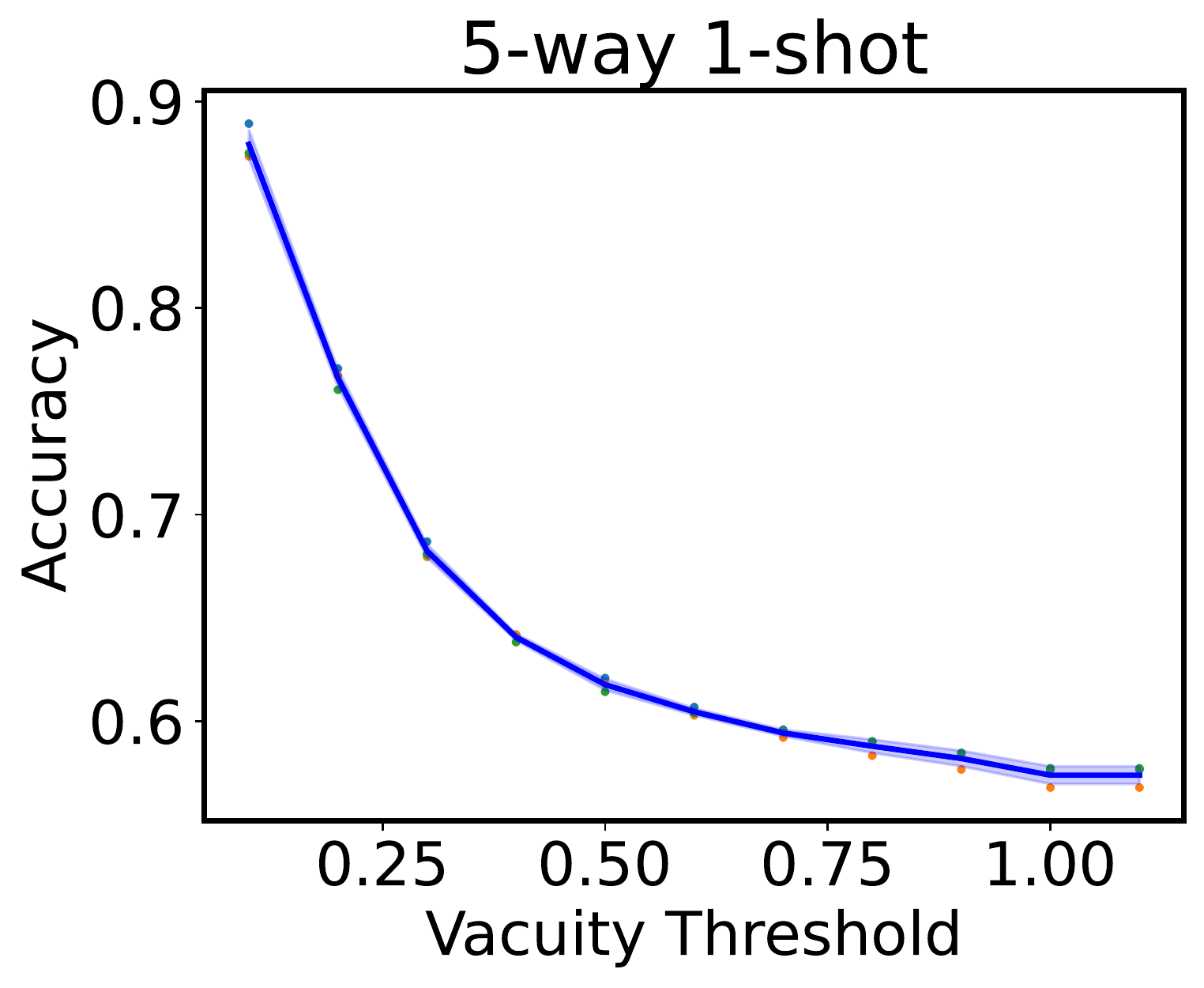}
  \caption{5-w 1-s CifarFS}
\end{subfigure}
\begin{subfigure}{0.23\textwidth}
  \centering
  \includegraphics[width=0.9\linewidth]{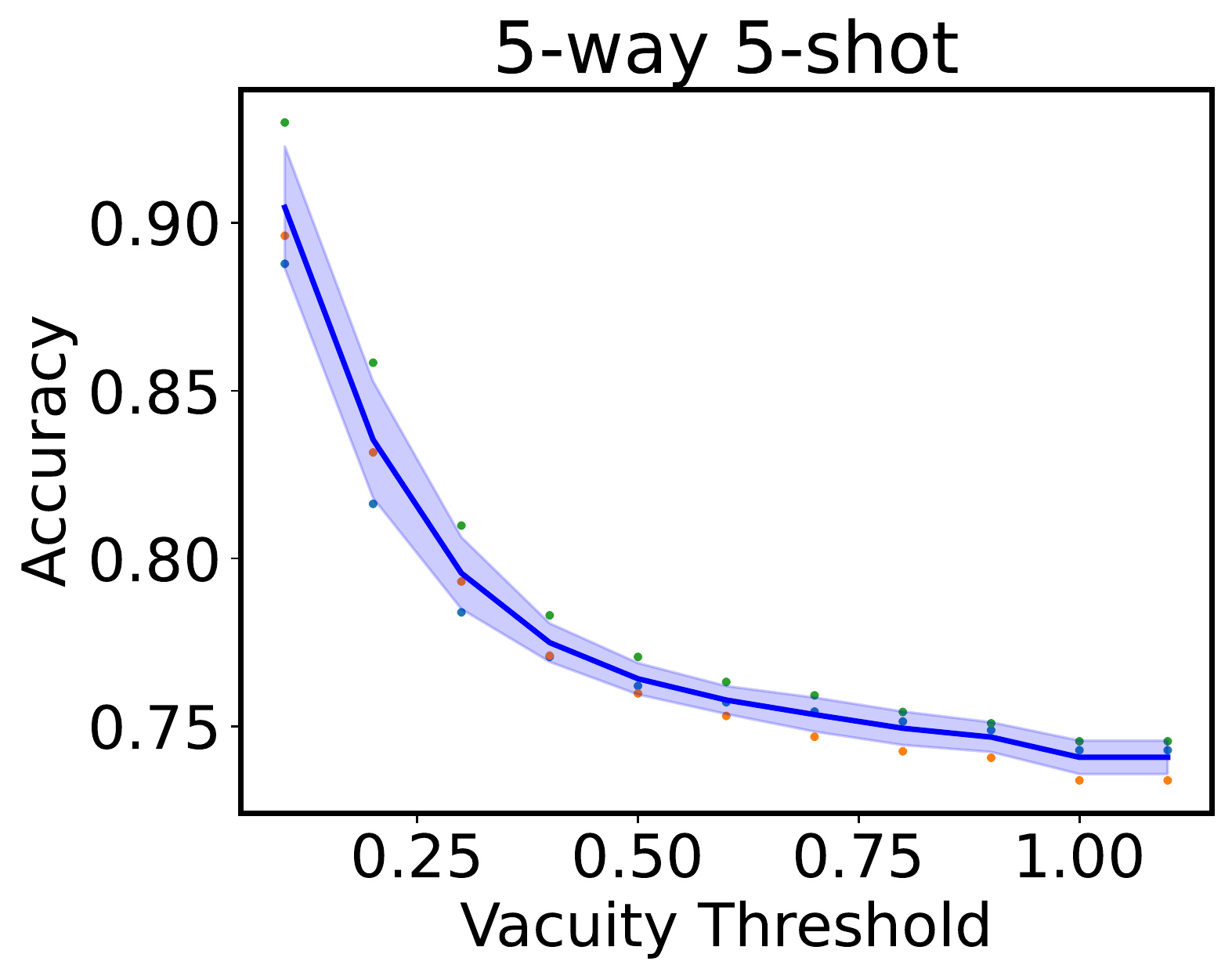}
  \caption{5-w 5-s CifarFS}
\end{subfigure}
\vspace{-2mm}
\caption{Prediction accuracy at different vacuity thresholds
}
\label{fig:advantageOfSm2}
\vspace{-2mm}
\end{figure} 
We further study the relationship between vacuity and the prediction accuracy for query set predictions to access the reliability of the uncertainty. Figure~\ref{fig:advantageOfSm2} visualizes how the prediction accuracy varies with vacuity using 5-way 1-shot and 5-way 5-shot tasks from CifarFS. For instance, in 5-way 5-shot CifarFS, by setting the vacuity threshold to 0.2 (considering prediction accuracy for samples whose vacuity is less than $0.2$), the model's prediction accuracy reaches around 85\%, which is around 10\% better than making predictions without consulting vacuity. The results for other datasets and settings are presented in the appendix that show a similar trend. Such flexibility can effectively avoid making less reliable predictions, a highly desirable property to facilitate decision-making in critical domains. 
\subsection{Active Task Selection}
Next, to demonstrate the label efficiency of our proposed model, we experiment under a limited labeling budget scenario. We consider that the model has access to the small labeled support set and the model needs to decide the tasks to be labeled (limited labeled budget). We consider a labeling budget of 10,000 tasks and compare our proposed belief-oriented task selection with the uncertainty aware Versa model that uses the model's predicted uncertainty on the query set for task selection and MAML model (which is not uncertainty aware) that randomly selects the query set of the task to be labeled. Our model uses a novel task uncertainty score in \eqref{eq:score} to select the tasks with the greatest task-level uncertainty to be labeled. Figure~\ref{fig:activetaskSelection} shows the results where we outperform the baselines in the early phase of the meta-learning process under limited task budget scenario. For instance, in $20$-way $5$-shot omniglot experiments, our model converges to more than 90\% accuracy only after 1000 iterations whereas the baseline models take significantly longer to converge to similar levels. Additional results are presented in the Appendix.
\begin{figure}[ht!] 
\centering
\begin{subfigure}{0.23\textwidth}
  \centering
  \includegraphics[width=0.9\linewidth]{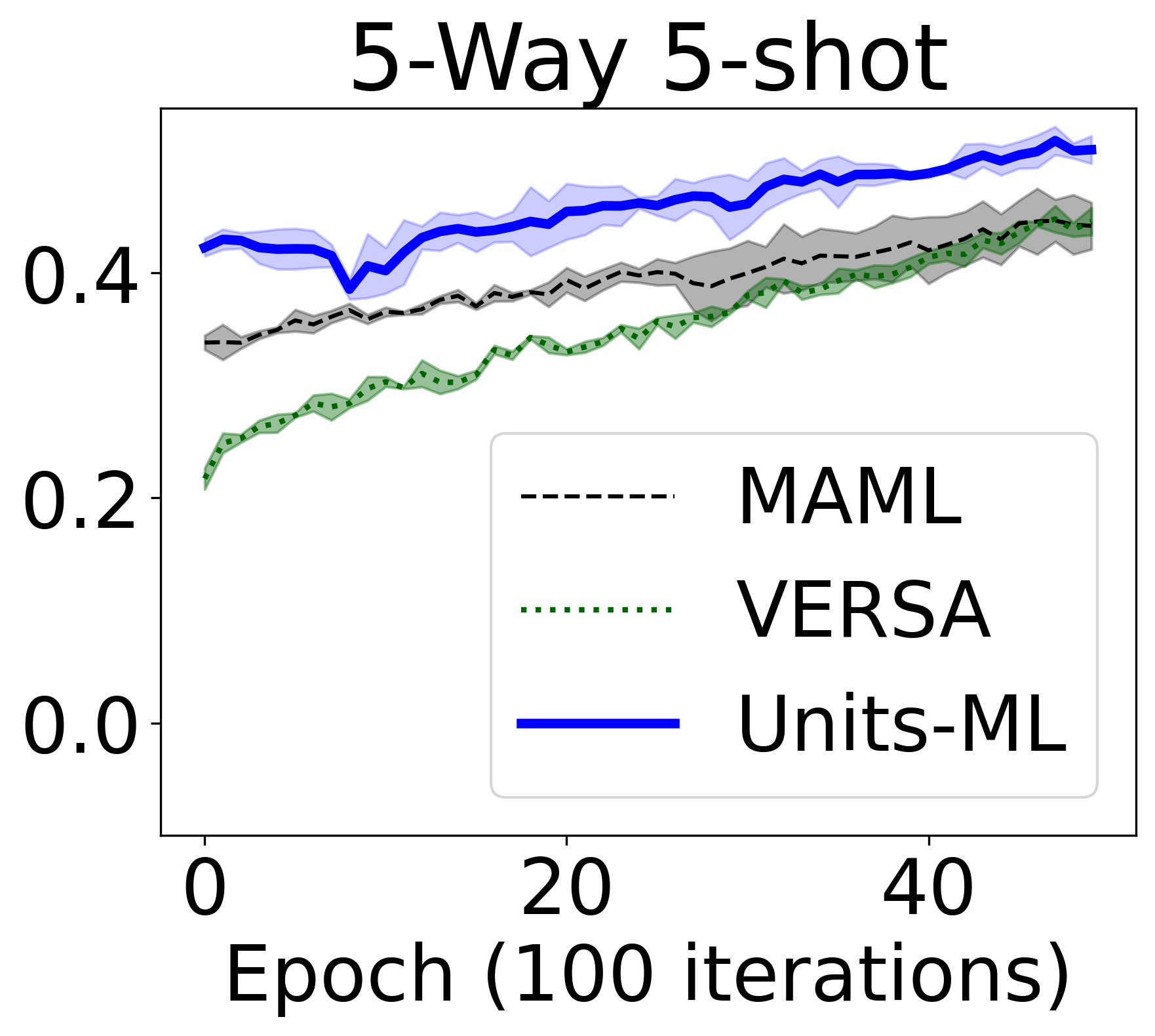}
  \caption{5-w 5-s CifarFS}
  \end{subfigure}
 \begin{subfigure}{0.23\textwidth}
  \centering
  \includegraphics[width=0.9\linewidth]{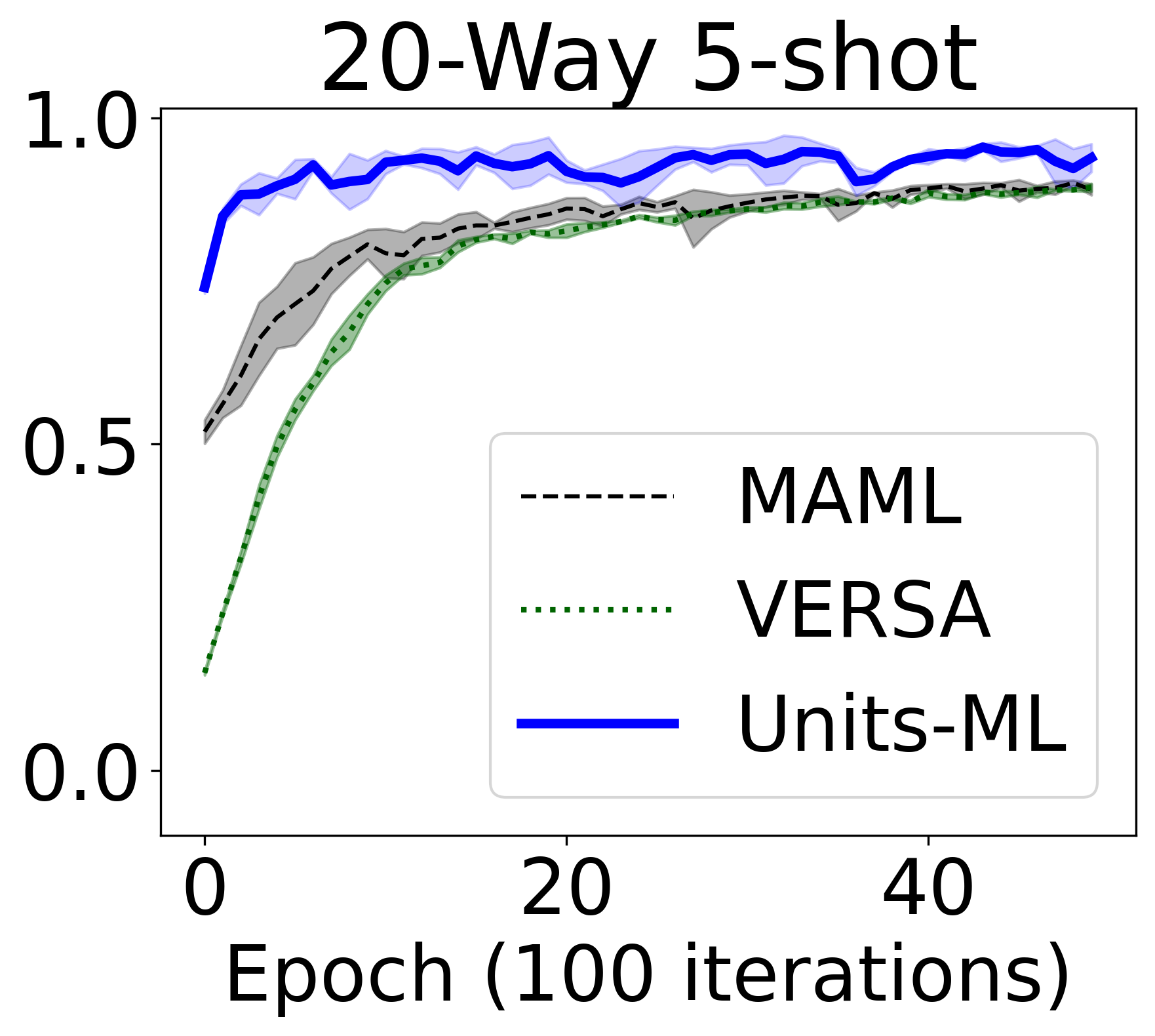}
  \caption{20-w 5-s Omniglot}
\end{subfigure}
\vspace{-2mm}
\caption{
Meta Learning under limited labeling budget}
\label{fig:activetaskSelection}
\vspace{-3mm}
\end{figure}

\begin{table}[h!]
\centering
\small
    \caption{Meta-learning Performance Comparison
    }
    \label{tab:metacomparison}
\begin{tabular}{|p{0.12\textwidth}p{0.14\textwidth}p{0.14\textwidth}|} 
\hline
{\bf Omniglot}&20-way 1-shot(\%)&20-way 5-shot(\%)\\
\hline
MAML&95.8$\pm$0.3&98.9$\pm$0.2\\
Reptile&89.43$\pm$0.14&97.12$\pm$0.32\\
VERSA&97.66$\pm$0.29&98.77$\pm$0.18\\
\hline
Units-NTS&91.96$\pm$0.48&97.38$\pm$0.07\\
Units-ML&93.17$\pm$0.25&97.72$\pm$0.18\\
Units-ML $0.2$&96.83$\pm$0.48&99.04$\pm$0.13 \\
Units-ML $0.1$&98.85$\pm$0.83&99.42$\pm$0.25 \\
\hline\hline
 {\bf \textit{mini}-ImageNet}&5-way 1-shot(\%)&5-way 5-shot(\%)\\
 \hline\hline
MAML&48.70$\pm$1.84&63.15$\pm$0.91\\ 
MetaSGD&50.47$\pm$1.87&64.03$\pm$0.94\\
MUMOMAML&49.86$\pm$1.85&-\\
HSML&50.38$\pm$1.85&-\\
CAVIA&51.82$\pm$0.65&65.85$\pm$0.55\\
\hline
LLAMA&49.40$\pm$1.83&-\\ 
PLATIPUS&50.13$\pm$1.86&-\\ 
BMAML&53.17$\pm$0.87&-\\ 
VERSA&53.40$\pm$1.82&67.37$\pm$0.86\\ 
ABML&45.0$\pm$0.6&-\\
Units-NTS&51.38$\pm$0.33&66.75$\pm$0.42\\
Units-ML&50.86$\pm$0.67&68.16$\pm$0.72\\ 
Units-ML $0.2$&61.25$\pm$2.89&80.70$\pm$0.93\\
Units-ML $0.1$&82.26$\pm$4.77&91.07$\pm$0.26 \\
\hline\hline
{\bf CifarFS} & 5-way 1-shot (\%) & 5-way 5-shot  (\%)\\ [0.5ex] 
\hline\hline
MAML&58.9$\pm$1.9&71.5$\pm$1.0\\ 
MetaSGD$^{*}$&57.77$\pm$0.17&71.16$\pm$0.21\\
Reptile&55.86$\pm$1.00&71.08$\pm$0.74\\ 
VERSA$^{*}$&60.6$\pm$0.68&74.69$\pm$0.29\\
\hline
Units-NTS&59.80$\pm$0.31&76.15$\pm$0.35\\
Units-ML&59.84$\pm$0.11&76.69$\pm$0.44\\
Units-ML $0.2$&76.62$\pm$0.42&83.54$\pm$1.73\\
Units-ML $0.1$&87.92$\pm$0.71&90.46$\pm$1.82\\
\hline
\end{tabular}
\begin{tablenotes}
      \small
      \item * Indicates results from local reproduction
    \end{tablenotes}
\vspace{-3mm}
\end{table}

\subsection{Meta-Learning Performance Comparison}
\vspace{-2mm}
 Meta-learning performance on the three datasets and comparison with state-of-the-art competitive models are presented in Table \ref{tab:metacomparison}. For the comparison, we present the Units-ML (the proposed task selection) and Units-NTS (stands for No Task Selection).  As can be seen, the model with task selection achieves improvement over no task selection in almost all experiments except $5$-way $1$-shot mini-ImageNet experiments where the performance is close. We also present Units-ML with different uncertainty thresholds (\eg, Units-ML $0.2$) to show the flexibility and effectiveness of the predicted uncertainty. For example, Units-ML 0.2 considers the predictive performance using the samples for which the model's predicted uncertainty is less than $0.2$. When considering only the confident predictions by adjusting the uncertainty threshold, our model achieves considerably higher accuracy demonstrating the effectiveness of uncertainty threshold. 

\section{Conclusion}
\vspace{-2mm}
In this paper, we propose an uncertainty-aware optimization-based meta-learning model for few-shot learning. Building upon the theory of subjective logic, the proposed Units-ML model successfully identifies the known uncertainty using vacuity and dissonance and identifies unknown uncertainty using (incorrect) belief mass, respectively. We design a novel task uncertainty score to choose the most informative tasks for meta-training. Our approach achieves comparable performance to many state-of-the-art optimization-based meta-learning methods. We further show the potential of our model for out-of-distribution distribution detection and label efficient task selection. Also, by adjusting the uncertainty threshold, Units-ML can provide a much more reliable prediction performance, which is essential to support decision-making in critical domains. As future work, we plan to extend our framework to metric-based and other meta-learning approaches that train in an episodic fashion.  

\vspace{-1mm}
\section*{Acknowledgements}
\vspace{-1mm}
This research was supported in part by an NSF IIS award IIS-1814450 and an ONR award N00014-18-1-2875. The views and conclusions contained in this paper are those of the authors and should not be interpreted as representing any funding agency. We would also like to thank the anonymous reviewers for their constructive comments.

{\small
\bibliographystyle{ieee_fullname}
\bibliography{refcv}
}

\afterpage{\blankpage}
\newpage

\appendix
\onecolumn

\begin{center}
    {\large \bf Appendix}
\end{center}

\paragraph{Organization of the appendix.} In this Appendix, we first prove the relationship between incorrect belief and conflicting belief in {\bf Section \ref{sec:proof_of_theorem}}. After that, we provide additional details of related works and comparison baselines in {\bf Section \ref{sec:related_works_and_baselines}}. We then describe the training process with a complexity analysis in {\bf Section \ref{sec:complexity_analysis_and_tr}}. Finally, we provide additional results on label-efficient meta-learning, illustrative examples on the predicted multidimensional belief, effectiveness of the multidimensional belief showing its potential for detecting OOD and uncertain predictions, comparison with other competitive baselines on any-shot and meta-dataset experiments, and ablation studies in {\bf Section \ref{sec:additional_exp_and_ablation}}. Source code for our experiments is available at
\footnote{Link: \href{https://github.com/pandeydeep9/Units-ML-CVPR-22}{https://github.com/pandeydeep9/Units-ML-CVPR-22}}

\section{Proof Theorem 1}\label{sec:proof_of_theorem}
Before presenting the main proof, we first review some important concepts and show some useful results that will be used in the proof. 
\begin{definition}
Consider we have a sample for which the model outputs $N-$ dimensional belief vector $\mathbf{b} = (b_1,b_2,...b_N)$. Let $S_b = \sum_{i=1}^N b_i$ represent the total belief, $b_{cor}$ represent the correct belief, $ib$ represent the incorrect belief, and $cb$ represent the conflicting belief/dissonance. The conflicting belief $cb$ can be computed from the belief vector as 
\begin{align}
\label{eq:proof_cb}
    &cb = \sum_{k=1}^K \Big( b_k \frac{ \sum_{j\neq k} b_j Bal(b_j, b_k)}{\sum_{j\neq k} b_j } \Big),\\ 
    &Bal(b_j, b_k) = 
\begin{cases}
  1 - \frac{|b_j - b_k|}{b_j+ b_k}, & \text{if } b_i b_j > 0\\
  0, & \text{otherwise}
\end{cases}
\end{align}
where $Bal(\cdot, \cdot)$ is the relative mass balance function between two belief masses. 
\end{definition}

\begin{proposition} \label{proof_prop}
Zero belief masses in the belief vector have no contribution to both the conflicting belief and the incorrect belief. For any two non-zero belief masses $b_i$ and $b_j$, the relative mass balance $Bal(b_i, b_j)$ is given by
\begin{align}
    Bal(b_i,b_j) = \frac{2\times\min(b_i,b_j)}{b_i + b_j}
\end{align}
\end{proposition}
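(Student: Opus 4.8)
\textbf{Proof proposal for Proposition~\ref{proof_prop}.}

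The plan is to break the statement into its two independent claims and dispatch each with a short direct argument. For the first claim, I would start from the analytical expressions \eqref{eq:proof_cb} and \eqref{eq:incorrectBelief}. Suppose some entry $b_k = 0$. In the conflicting belief, the outer sum has $b_k$ as a multiplicative factor in the $k$-th term, so that term vanishes; moreover, inside the inner sum of any other term indexed by $n \neq k$, the contribution $b_k\, Bal(b_k, b_n)$ is also zero, and the denominator $\sum_{j \neq n} b_j$ is unaffected by dropping the zero mass. Hence a zero belief mass contributes nothing to $cb$. For the incorrect belief on a single sample, $ib = \|\mathbf{b}\odot(\mathbf{1}-\mathbf{y})\|_1 = \sum_{n : y_n = 0} b_n$, so a coordinate with $b_k = 0$ again adds nothing. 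This establishes that we may freely restrict attention to the sub-vector of strictly positive belief masses when reasoning about either quantity.

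For the second claim, I would do a direct case analysis on the sign of $|b_i - b_j|$. When $b_i, b_j > 0$ we are in the first branch of the $Bal$ definition, so $Bal(b_i, b_j) = 1 - \frac{|b_i - b_j|}{b_i + b_j} = \frac{(b_i + b_j) - |b_i - b_j|}{b_i + b_j}$. Without loss of generality assume $b_i \le b_j$, so $|b_i - b_j| = b_j - b_i$ and the numerator becomes $(b_i + b_j) - (b_j - b_i) = 2b_i = 2\min(b_i, b_j)$; the symmetric sub-case $b_j \le b_i$ is identical. This gives $Bal(b_i, b_j) = \frac{2\min(b_i, b_j)}{b_i + b_j}$, as claimed.

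Neither part presents a real obstacle; the only thing to be careful about is the bookkeeping around the edge case in the $Bal$ definition — note the stated condition reads ``$b_i b_j > 0$'' (and the ``otherwise'' clause sets $Bal = 0$), so one should confirm that the algebraic identity $\frac{2\min(b_i,b_j)}{b_i+b_j}$ is only asserted for \emph{non-zero} $b_i, b_j$, which is exactly the hypothesis in the proposition. The main value of this proposition is not its difficulty but its role downstream: it lets the proof of Theorem~\ref{pythagorean} rewrite every nonzero term of $cb$ in the clean form $b_k \cdot \frac{\sum_{j \neq k} \frac{2 b_j \min(b_j, b_k)}{b_j + b_k}}{\sum_{j \neq k} b_j}$ and then bound it against the sum of non-maximal (hence ``incorrect-eligible'') belief masses, so I would present it exactly in that reusable form.
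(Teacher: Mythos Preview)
Your proposal is correct. The paper actually states Proposition~\ref{proof_prop} without proof, treating both parts as immediate consequences of the definitions in \eqref{eq:proof_cb} and \eqref{eq:incorrectBelief}; your explicit verification (zero masses kill both the outer factor and the inner summands of $cb$ and contribute nothing to $ib$, and the algebraic rewrite $1 - \frac{|b_i - b_j|}{b_i + b_j} = \frac{2\min(b_i,b_j)}{b_i+b_j}$ for $b_i,b_j>0$) fills this in cleanly and matches exactly how the proposition is used downstream in the proof of Theorem~\ref{pythagorean}.
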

\begin{proposition}\label{lemma1dis}
The conflicting belief ($cb$) is a permutation invariant function over the belief vector ($\mathbf{b}$).
\end{proposition}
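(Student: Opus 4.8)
The plan is to prove permutation invariance directly, by a change of summation index, after isolating the one structural fact that makes it work: the relative mass balance is symmetric in its two arguments, $Bal(b_j,b_k)=Bal(b_k,b_j)$, which is immediate since $|b_j-b_k|=|b_k-b_j|$, $b_j+b_k=b_k+b_j$, and the excluded case $b_jb_k=0$ is itself symmetric. A second preliminary step is to fix the convention for the degenerate summand: whenever $\sum_{j\neq k}b_j=0$ (equivalently, at most one belief mass is nonzero) the corresponding term in \eqref{eq:proof_cb} is read as $0$, consistent with Proposition~\ref{proof_prop}; permuting coordinates does not change which entries vanish, so this convention is already permutation invariant and needs no further attention.

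Next I would take an arbitrary permutation $\sigma$ of $\{1,\dots,N\}$, set $\mathbf{b}'=(b_{\sigma(1)},\dots,b_{\sigma(N)})$, and substitute into the definition to get
\begin{align}
cb(\mathbf{b}') = \sum_{k=1}^{N}\Big( b_{\sigma(k)}\,\frac{\sum_{j\neq k} b_{\sigma(j)}\,Bal(b_{\sigma(j)},b_{\sigma(k)})}{\sum_{j\neq k} b_{\sigma(j)}}\Big). \nonumber
\end{align}
Then I would re-index the outer sum by $m=\sigma(k)$ and, for each fixed $k$, the inner sum by $\ell=\sigma(j)$. Since $\sigma$ is a bijection of $\{1,\dots,N\}$, the outer index $m$ runs over all of $\{1,\dots,N\}$; and since $\sigma$ restricts to a bijection from $\{1,\dots,N\}\setminus\{k\}$ onto $\{1,\dots,N\}\setminus\{\sigma(k)\}=\{1,\dots,N\}\setminus\{m\}$, the inner index $\ell$ runs over exactly $\{\ell:\ell\neq m\}$. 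Therefore the double sum is just a reordering of the terms defining $cb(\mathbf{b})$, and $cb(\mathbf{b}')=cb(\mathbf{b})$. As $\sigma$ was arbitrary, $cb$ is permutation invariant. (One could instead note that the symmetric group is generated by transpositions and check invariance under a single swap of two coordinates, but the re-indexing above dispatches all permutations at once.)

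There is essentially no analytical obstacle here; the statement follows from symmetry of $Bal$ and a bijective relabeling. The only points genuinely requiring care — and the closest thing to a ``hard part'' — are the index bookkeeping (verifying that $\sigma$ carries the punctured set $\{j\neq k\}$ bijectively onto $\{\ell\neq\sigma(k)\}$) and the consistent handling of the $0/0$ summand noted above, so that the two sides are compared under the same convention. This proposition is what will later license assuming, without loss of generality, that the belief vector is arranged in any convenient order (e.g.\ with the correct-class belief in a fixed position, or with the masses sorted) when bounding $cb$ in terms of $ib$.
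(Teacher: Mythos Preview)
Your argument is correct: the symmetry of $Bal$ together with the bijective re-indexing $m=\sigma(k)$, $\ell=\sigma(j)$ is exactly what is needed, and your handling of the degenerate $0/0$ summand is consistent with Proposition~\ref{proof_prop}. The paper itself states Proposition~\ref{lemma1dis} without proof, treating it as immediate from the form of \eqref{eq:proof_cb}; your write-up therefore supplies more detail than the paper does, but the underlying observation (that $cb$ is a symmetric function of the $b_i$'s because every ingredient is) is the same, and the paper uses the proposition in precisely the way you anticipate---to reorder the belief vector at the start of the proof of Theorem~\ref{pythagorean}.
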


\begin{lemma}\label{lemma2ib}
The incorrect belief is an upper bound of $N-1$ belief subsets of the $N$-dimensional belief vector i.e. $ib \geq S_b - b_{max}$ where $b_{max} = max (b_1,b_2,...b_N)$ 
\end{lemma}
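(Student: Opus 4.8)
The plan is to identify the incorrect belief of a single sample with $S_b$ minus the belief placed on the true class, and then bound that by $S_b - b_{max}$ via the trivial estimate $b_{cor}\le b_{max}$. Fix a query sample with belief vector $\mathbf{b}=(b_1,\dots,b_N)$, all $b_n\ge 0$ by the belief-mass constraint, and one-hot label $\mathbf{y}$; let $c$ be the index of its single nonzero coordinate (the correct class) and write $b_{cor}=b_c$. The key first step is to unfold the $\ell_1$ norm in the definition of incorrect belief \eqref{eq:incorrectBelief} at the sample level:
\begin{align*}
ib=\big\|\mathbf{b}\odot(\mathbf{1}-\mathbf{y})\big\|_1=\sum_{n=1}^N b_n(1-y_n)=\sum_{n\neq c}b_n=S_b-b_{cor},
\end{align*}
where the middle equality drops the absolute values because every entry of $\mathbf{b}\odot(\mathbf{1}-\mathbf{y})$ is nonnegative, and the last uses that $\mathbf{y}$ is one-hot, so exactly the $c$-th term is removed.

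The second step is immediate: since $b_{cor}=b_c\le\max_{1\le n\le N}b_n=b_{max}$, we obtain $ib=S_b-b_{cor}\ge S_b-b_{max}$, which is exactly the claimed inequality (and $S_b-b_{max}$ is the sum of the $N-1$ beliefs left after deleting the maximal one, matching the wording of the statement). I would also point out that this estimate is label-agnostic: the right-hand side $S_b-b_{max}$ does not involve $\mathbf{y}$, so whichever class turns out to be correct, the incorrect belief is at least $S_b-b_{max}$, with equality precisely when the correct class carries the maximal belief mass. This is the form in which the lemma is invoked inside Theorem~\ref{pythagorean}, where the query labels are unavailable and only such a worst-case-over-labels bound on $ib$ can be used.

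There is no substantive obstacle here — the lemma is essentially a one-line consequence of the definition. The only points needing care are (i) the collapse of $\|\cdot\|_1$ to an ordinary sum, which relies on $b_n\ge 0$ for every $n$, and (ii) using that the label is genuinely one-hot, so that ``the correct belief'' refers to a single coordinate and $ib=S_b-b_{cor}$ rather than an expression with several deleted terms; once these are in place, applying $b_{cor}\le b_{max}$ completes the proof.
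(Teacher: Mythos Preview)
Your proof is correct and follows essentially the same approach as the paper's: establish $ib = S_b - b_{cor}$ and then apply $b_{cor} \le b_{max}$. The only difference is that you explicitly derive $ib = S_b - b_{cor}$ from the $\ell_1$-norm definition \eqref{eq:incorrectBelief}, whereas the paper simply asserts it, so your version is slightly more detailed but otherwise identical.
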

\begin{proof}
We know, $ib = S_b - b_{cor}$, and $b_{max} = \max(b_1,b_2,...b_N) \geq b_{cor}$. $ \implies ib \geq S_b - b_{max}$
\end{proof}
\paragraph{Proof of Theorem 1.}
\begin{proof}
Consider a sample in the task $t$. Due to Proposition \ref{proof_prop}, we can consider a belief vector with all non-zero beliefs for the proof. Further, as a consequence of Proposition \ref{lemma1dis}, without loss of generality, we can consider that the beliefs in the belief vector are ordered in an descending order i.e. $\mathbf{b} = (b_1, b_2,...b_N), b_{max}=b_1\geq b_2,..\geq b_N$
Now, the conflicting belief can be evaluated as
\begin{align}\label{proof:defn_cb}
    &cb = \sum_{i =1}^{N} b_i \times \bigg(\frac{\sum_{j\neq i }^{N}b_jBal(b_i, b_j)}{\sum_{j\neq i}^{N}b_j}\bigg) \\
    \nonumber&= b_1 \Big( \frac{1}{\sum_{j\neq 1} b_j}\Big) \Big( \frac{2b_2^2}{b_1 + b2} + \frac{2b_3^2}{b_1 + b3} +... \frac{2b_N^2}{b_1 + bN} \Big)\\
    \nonumber&+ b_2 \Big( \frac{1}{\sum_{j\neq 2} b_j}\Big) \Big( \frac{b_12b_2}{b_1 + b2} + \frac{2b_3^2}{b_2 + b3} +... \frac{2b_N^2}{b_2 + bN} \Big) +... \\
    \nonumber& +b_N \Big( \frac{b_N}{\sum_{j\neq N} b_j}\Big) \Big( \frac{b_12b_N}{b_1 + b_N} + \frac{b_22b_N}{b_2+B_N} +... \frac{b_{N-1}2b_N}{b_{N-1} + bN} \Big)
\end{align}
From the above expression, we can see that the numerator does not have a $b_1^2$ term. Considering the terms in dissonance with $2b_2^2$ in numerator, we get
\begin{align*}
    b_2^2 \text{  terms} &= 2b_2^2 \Big( \frac{b_1}{b_1+b_2}\Big) \Big(\frac{1}{\sum_{j\neq 1} b_j} + \frac{1}{\sum_{j \neq 2} b_j}\Big) \\
    &= b2 \times 2b_2 \Big( \frac{b_1}{b_1+b_2}\Big) \Big(\frac{b_1 + b_2 + 2 \times \sum_{j \neq 1,2}b_j}{\sum_{j\neq 1} b_j\sum_{j \neq 2} b_j}\Big) \\
    & \quad \quad \text { as } 2b_2 \leq b_1 + b_2\\
    b_2^2 \text{  terms} &\leq b_2 b_1 \Big( \frac{(2 b_2+ 2 \times \sum_{j \neq 1 ,2 }b_j)}{b_1+b_2 + 2 \times \sum_{j\neq1,2}b_j}\Big) \times \Big(\frac{b_1 + b_2 + 2 \times \sum_{j \neq 1,2}b_j}{\sum_{j\neq 1} b_j\sum_{j \neq 2} b_j}\Big)\\
    &\leq 2 b_2 b_1 \frac{1}{\sum_{j\neq 2} b_j} \leq 2b_2
\end{align*}
Now, considering the terms in dissonance with $2b_n^2, n \in [3, N]$ in numerator, we get
\begin{align*}
    b_n^2 \text{  terms} &= 2b_n^2 \bigg(  \frac{b_1}{b_1+b_n} \big(\frac{1}{\sum_{j\neq 1} b_j} + \frac{1}{\sum_{j \neq n} b_j} \big)+...
     + \frac{b_{n-1}}{b_{n-1}+b_n} \big(\frac{1}{\sum_{j\neq {n-1}} b_j} + \frac{1}{\sum_{j \neq n} b_j} \big) \bigg)\\
    & \quad \quad \text { as } 2b_n \leq b_1 + b_n,..., 2b_n \leq b_{n-1}+b_{n}\\
     b_n^2 \text{  terms} &\leq 2b_n  \frac{b_1}{ \sum_{j\neq n}b_j} + 2b_n \frac{b_2}{\sum_{j\neq n} b_j}+...2b_n \frac{b_{n-1}}{\sum_{j \neq n} b_j} 
    \leq 2b_n
\end{align*}
We replace the bounds in the Equation \eqref{proof:defn_cb} and use Lemma \ref{lemma2ib} to get
\begin{align} \label{proof:ibbound}
    cb \leq 2(b_2 + b_3 +... b_N) = 2 (S_b - b_{max}) \leq 2 ib
\end{align}
which proves that for any sample, the incorrect belief is lower bounded by half the conflicting belief. Finally, task incorrect beliefs and the task conflicting beliefs are average of the query instance incorrect beliefs and conflicting beliefs respectably. The relationship in Eqn. \eqref{proof:ibbound} holds true for all query instances proving that the task incorrect belief is bounded by half of the conflicting belief on the task. 
\end{proof}

\section{Details of Related Work and Baselines}\label{sec:related_works_and_baselines}
\paragraph{Subjective Logic Basics}
\begin{figure*}[h!]
\vspace{-3mm}
\centering
\begin{subfigure}{0.23\textwidth}
\centering
\includegraphics[width=0.9\linewidth]{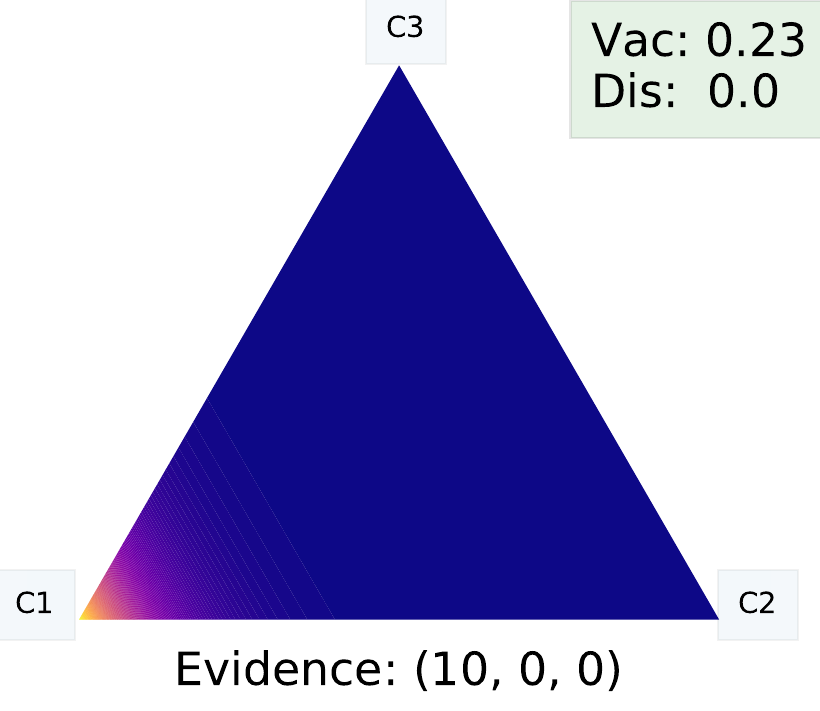}
\caption{A Confident Prediction}
\end{subfigure}
\begin{subfigure}{0.23\textwidth}
\centering
\includegraphics[width=0.9\linewidth]{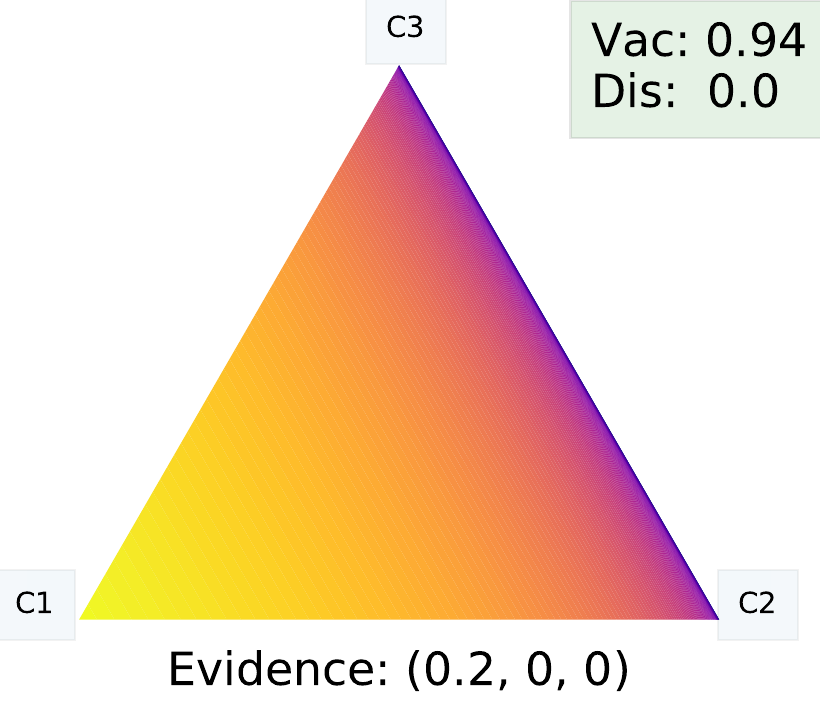}
\caption{High vacuous belief}
\end{subfigure}
\begin{subfigure}{0.23\textwidth}
\centering
\includegraphics[width=0.9\linewidth]{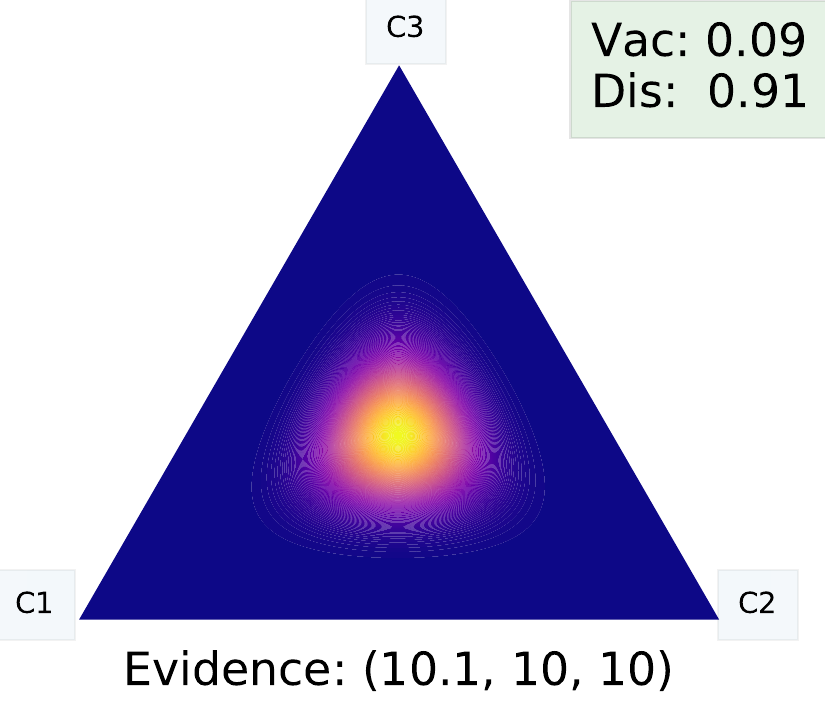}
\caption{High conflicting belief}
\end{subfigure}
\begin{subfigure}{0.23\textwidth}
\centering
\includegraphics[width=0.9\linewidth]{figures/plotdirichlet/UUfig2-1.png}
\caption{High Total Uncertainty}
\end{subfigure}
\caption{Subjective Logic Opinions: All 4 opinions correspond to a prediction of Class 1 with different uncertainty characteristics. Only the first opinion has low total uncertainty and can be trusted.}
\vspace{-3mm}
\label{fig:subjectiveOpinions_ap}
\end{figure*}

 Subjective Logic (SL) is an extension of probabilistic logic~\cite{josang2016subjective}, which considers the uncertainty in probability assignments along with the probabilities. Recently, using SL, deterministic deep learning (DL) models have been trained to output accurate confidence in predictions along with the predictions for both classification \cite{sensoy2018evidential} \cite{shi2020multifaceted} and regression~\cite{amini2019deep} problems. For classification, the key idea is to train the DL models such that for any input, the model learns to output non-negative evidence for different classes. Using this evidence, the belief for different classes and the model's confidence can be calculated as:
$$
b_k = \frac{e_k}{\sum_{k=1}^K e_k + 1}, \quad v = \frac{K}{\sum_{k=1}^K e_k + 1} , \quad e_k \geq 0
$$
where $e_k$ and $b_k$ represent the evidence and belief for the $k^{th}$ class and $v$ represents the vacuity in the $K-$class classification problem. The vacuous belief (vacuity) is mainly due to the lack of evidence, is greatest when the model outputs no evidence, decreases as the model's evidence increases, and usually indicates unseen/out-of-distribution instances. Complimentary to the vacuous belief, conflicting belief (Eqn \eqref{eq:dis} \ie, the dissonance) \cite{josang2018uncertainty} indicates that the model is confused about the class assignment for the sample and is usually high for noisy/challenging data instances. Based on the conflicting and vacuous beliefs, we can decide which prediction to trust more. Moreover, for predictions with high uncertainty, we can infer the source of the uncertainty and take appropriate actions. Both vacuous belief and conflicting belief are instances of known uncertainty. Finally, there is unknown uncertainty (i.e. the uncertainty that the model is not aware of) that can be estimated from the incorrect beliefs. Unknown uncertainty is mainly due to highly confident incorrect predictions, can be quantified after obtaining the label information, and can only be estimated during training. The evidential models should be trained to minimize as much of the incorrect belief as possible. 

We present an illustrative description of the subjective logic characteristics in Figure~\ref{fig:subjectiveOpinions_ap} for a 3 class classification (say between apples, mangoes, and oranges). If novel image (say a boat image) is shown to the model, the evidential model can express its lack of knowledge by outputting no evidence or equivalently high vacuity (Figure~\ref{fig:subjectiveOpinions_ap}(b)). Similarly, for an image that is confusing among multiple classes, the model can output high evidence for all the classes leading to high dissonance (Figure~\ref{fig:subjectiveOpinions_ap}(c)). Thus, accurately trained evidential models significantly boost the capabilities of deep learning models as they can identify the source of uncertainty, output level of confidence in predictions, and detect the data noise. Refer to \cite{josang2016subjective} for a thorough study of SL and its characteristics. 
\paragraph{Uncertainty-based Baselines.} Meta-learning has recently been extended to the Bayesian setting \cite{grant2018recasting, gordon2018metalearning, nguyen2020uncertainty}.
Grant et al. \cite{grant2018recasting} developed LLAMA, a bayesian extension of MAML, which used Laplace approximation to learn a distribution instead of the point estimate for task-specific parameters. LLAMA assumes the distribution for task-specific parameters to be gaussian which requires approximating the high-dimensional covariance matrix and thus may not scale to large networks. Finn et al., ~\cite{finn2018probabilistic} presented Probabilistic MAML (PLATIPUS), which addresses the ambiguity of few-shot tasks by using principles of variational inference. Probabilistic MAML relies on a complex training procedure to learn a distribution only for the global parameters and resorts to point estimate for task-specific parameters. Gordan et al., presented VERSA, which uses an amortization network to output a distribution over weights of the base network, learns a distribution over task-specific parameters, and obtains the posterior predictive distribution \cite{gordon2018metalearning}. Kim et al. presented Bayesian MAML, which employs an ensemble of MAML's to obtain uncertainty estimates \cite{yoon2018bayesian}. Uncertainty awareness can be achieved by formulating meta-learning as a problem of inference in a hierarchical bayesian model as in ABML\cite{ravi2018amortized}. ABML uses amortized hierarchical variational inference for the task-specific distribution, learns a point-estimate for the prior distribution (global parameters), and uses Bayes by Backprop to obtain the task-specific distribution. In addition to the above uncertainty-aware meta-learning works, we consider the following baselines:

{MAML \cite{finn2017model}} aims to learn the global parameters such that the model can adapt to new tasks using a few steps of gradient descent. MetaSGD\cite{li2017meta} is an improvement over MAML where both the learning rate and learning direction are learned along with the good initialization. CAVIA \cite{zintgraf2018cavia} is another extension of MAML which addresses the meta-overfitting issue of MAML by separating the model parameters and updating only the subset of the parameters at test time. Reptile \cite{nichol2018first} is a first-order alternative to MAML trained for within task generalization. Finally, {HSML \cite{yao2019hierarchically} and MUMOMAML \cite{vuorio2019multimodal}} are extensions of MAML designed to handle heterogeneous task settings. Moreover, our approach can be applied to most of the optimization-based approaches. We present an extension of our approach to MetaSGD in the additional experiments.

\section{Training Process and Complexity Analysis} \label{sec:complexity_analysis_and_tr}
We aim to learn a good initialization such that for a new task, after learning from the support set, the meta-model can make accurate predictions as well as output the confidence in the prediction. MAML uses softmax activation at the final layer and cross-entropy loss that leads to the Maximum Likelihood Estimation of parameters. In Units-ML, for both local and global updates, we assume that the label for each sample in an N-way K-shot classification problem is obtained from a generative process with a Dirichlet prior: $\text{Dir}({\bm p}_i|\boldsymbol{\alpha}_i)$ and a multinomial likelihood: $\text{Mult}({\bm y}_i|{\bm p}_i)$. 

In particular, for input ${\bm x}_i$ and $\mathbf{y}_i$ as the one-hot representation of the ground truth class, we treat outputs of the neural network as an evidence vector 
${\bm e}_i = (e_i^1,e_i^2,...e_i^N)^\top$. To ensure that the evidence is non-negative, we transform the final layer output by the Softplus function  $\mathbf{e}_i = \ln(1+e^{f(\mathbf{x}_i; \theta)})$. Further, we remove softmax function from our model as it squashes the model outputs in the range $[0,1]$ which is too restrictive for the model evidence. With this, the parameters for the Dirichlet prior are calculated as $\boldsymbol{\alpha}_i = {\bm e}_i +\bm{1}$, following Eqn. \eqref{eq:vac}. 
Similar to \cite{sensoy2018evidential}, we maximize the marginal likelihood obtained from the Dirichlet prior and the multinomial likelihood. 
\begin{align}
\begin{split}
\mathcal{L}_{mar} ({\bm x}_i,{\bm y}_i, \theta_m) = - \ln \Big(\int \text{Mult}({\bm y}_i|{\bm p}_i) \text{Dir}({\bm p}_i|\boldsymbol{\alpha}_i) d {\bm p}_i \Big) \ \
= -\sum_{j=1}^N y_{i}^j\ln(\frac{e_i^j+1}{\sum_{j=1}^N e_i^j + 1}) 
\end{split}
\label{eq:theMarginalLoss}
\end{align}
Additionally, we want to train the model to output no incorrect belief that is achieved by 
using an incorrect belief regularization (Eqn. \eqref{eq:incorrectBelief}) 
\begin{align}
    \mathcal{L}_{ib}({\bm x}_i,{\bm y}_i) =  \mathbf{b}_i \odot (\mathbf{1}-\mathbf{y}_i)
\end{align}
With this, the overall loss for one sample is given by:
\begin{align}
    \label{eq:appendix_overall_loss_sample}
    \mathcal{L}({\bm x}_i,&{\bm y}_i, \theta_m) = \mathcal{L}_{mar} ({\bm x}_i,{\bm y}_i) + \eta \mathcal{L}_{ib}({\bm x}_i,{\bm y}_i)
\end{align}
Here, $\eta$ is the regularization coefficient to balance between minimizing incorrect belief and minimizing the marginal likelihood under the Dirichlet prior, and $\theta_m$ represents the neural network parameters used to output evidence. 

In our meta-learning setup, we consider a batch of tasks at each meta-iteration where each task has a support set and query set. We use the support set loss for local adaptation to task $t$ in the inner loop using $M$ steps of gradient descent as:
\begin{align}
\begin{split}
\theta_0^t &= \theta \quad [\text{Make a copy of global parameters}]\\
\theta_1^t &= \theta_0^t - \alpha \nabla_{\theta_{0}^t} \mathcal{L}_t\big[{f(\theta_{0}^t,X_S^t), Y_S^t}\big] \\
...\\
\end{split}\\
\theta_M^t &= \theta_{M-1}^t - \alpha \nabla_{\theta_{M-1}^t} \mathcal{L}_t\big[{f(\theta_{M-1}^t,X_S^t), Y_S^t}\big]   
\end{align}
At each gradient descent step, we define the support set loss as the average loss of $N\times K$ samples of the support set using the model parameters at that step
\begin{equation}\label{EDLLossSup}
     \mathcal{L}_t\big[{f(\theta_{m-1}^t,X_S^t), Y_S^t}\big] = \frac{1}{N\times K}\sum_{i = 1}^{N\times K} \mathcal{L} ({\bm x}_i,{\bm y}_i, \theta_{m-1}^t)
\end{equation}
We perform this local adaptation for each of the $I$ tasks at each meta-iteration. Next, we use the loss of the adapted model over query set samples to update the global parameters as
\begin{equation}
    \theta \xleftarrow{} \theta - \beta \sum_{t=1}^{I}\nabla_\theta \mathcal{L}_t\big[{f(\theta_{M}^{t},X_Q^t), Y_Q^t } \big]
\end{equation}
The query set loss is defined as the average loss of $N_q$ query set instances with adapted model parameter
\begin{equation}\label{appendix_EDLLoss}
     \mathcal{L}_t\big[{f(\theta_{M}^t,X_Q^t), Y_Q^t}\big] = \frac{1}{N_Q}\sum_{i = 1}^{N_Q} \mathcal{L} ({\bm x}_i,{\bm y}_i, \theta_{M}^t)
\end{equation}

\paragraph{Experiment and Dataset Details.}
We train the model using a batch of tasks at each iteration where the loss is given by Eqn. \eqref{appendix_EDLLoss}. We consider three datasets: Omniglot, CifarFS and Mini-ImageNet whose details are given in Table \ref{tab:datasetDetails}. We set $\eta=\min(8,0.8 \times E) $ for all Omniglot experiments, $\eta=\min(2,0.2 \times E) $ for CifarFS and 5-way 5-shot mini-ImageNet experiments, and $\eta=\min(0.5,0.05 \times E) $ for 5-way 1-shot mini-ImageNet experiments where $E$ is the epoch number. In the query set, unless specified, we consider 1 instance/class for omniglot and 2 instances/class for all other experiments. We consider a batch of 8 tasks at each iteration for Omniglot experiments, 4 tasks for CifarFS and 5-way 5-shot mini-ImageNet experiments and 2 tasks for 5-way 1-shot mini-ImageNet experiments. We consider 500 iterations/task and start task selection after 5 epochs. In active task selection, we consider a batch of 2 tasks at each iteration and increase $E$ every 100 iterations, start task selection after 100 iterations, and set $\eta=\min(2,0.2 \times E)$. Similar to Antoniou et al. \cite{antoniou2018how} and ALFA \cite{NEURIPS2020_ee89223a}, we learn the batch normalization parameters per step, learn the inner loop learning rate per layer and per step, use an ensemble of top 3 validation set models, and average the results from 3 independently run models to get the final test set performance in Table \ref{tab:metacomparison}. These implementation tricks lead to some improvements for the Units-NTS model, with task selection further improving the generalization results. Furthermore, we start with balancing term $\lambda$ value of $\lambda_{start}=0.99$ and dynamically adjust it  as $\lambda = \lambda_{start} - (\lambda_{start} - \lambda_{end}) \times \min(1.0, E/50)$ as training progresses to reach $\lambda_{end} = 0.5$ at the end of training. For local adaptation, we take 5 gradient descent steps in all Units experiments and 1 gradient descent step in all MetaSGD experiments. 

\begin{table}[htpb]
    \centering
        \caption{Dataset Properties}
    \label{tab:datasetDetails}
\begin{tabular}{||c| c c c||} 
 \hline
 Characteristic&Omniglot&\textit{mini}-ImageNet&CifarFS\\ [0.5ex] 
 \hline\hline
 Image Size&28$\times$28&84$\times$84&32$\times$32\\
 \hline
 Channels&1&3&3\\
 \hline
 Total Classes&1623&100&100\\
  \hline
Tr/Val Split&1150/50&64/16&64/16\\
 \hline
 Images/Class&20&600&600\\
\hline
Augmentation&Rotation&No&No\\
\hline
\end{tabular}
\end{table}

\paragraph{Algorithm and Complexity Analysis.}
We present the training process of Units-ML, the proposed task selection method in Algorithm \ref{alg:unitsml}. In Units-ML, we consider $I$ multi-query tasks (see Fig. \ref{fig:repIdeaFig}) with a total of $J$ query sets such that minimal computation is required for determining the task uncertainty score. 
Our complexity analysis of MAML shows that the global update involves calculating Hessian gradient products and is computationally expensive, especially when we take many inner loop updates, or when the network has a large number of parameters. In particular, each additional inner loop update adds a $(I - H_{k})$ term in the outer loop update. The Hessian calculation has an $O(D^2)$ complexity for a model with $D$ parameters (where $D$ is in the scale of thousands--millions in many typical deep neural networks). Even with the use of efficient Hessian-vector multiplication techniques \cite{pearlmutter1994fast}, the computational cost would increase by $O(D)$ for one additional inner loop update. Furthermore, the computation is carried out at each meta-iteration for each task. 
With task selection, we ensure that tasks are informative for the global parameter update. For task selection, we only need predictions from the adapted model that adds little additional cost as compared to training on a new task. Specifically, task selection is independent of the number of inner-loop gradient descent steps in task adaptation and scales to any number of inner-loop updates without any additional computation. Tasks are selected using the informativeness score of the query set which requires one additional forward pass through the model. This introduces additional computation which scales linearly with the number of query sets considered in task selection. For a baseline model with the computational cost of $O(I\times M \times (F +  B) )$, the cost for the model with task selection is $O(I\times M \times ( F +  B) +J\times F)$ where both models adapt for $M$ steps in the inner loop using $I$ tasks (Task Selection model selects $I$ query sets to be labeled from an unlabeled pool of $J$ query sets), and the computational cost for forward pass through the model $F$ is lower than the computational cost of the backward pass $B$. 

\begin{algorithm}

 \caption{Units-ML Task Selection for Efficient Meta-Training}
 \label{alg:unitsml}
\SetAlgoLined
{\underline{\textbf{Require: }{$\pi, J$: Initial training and sampled tasks} }}\;
\textbf{Require: }{$p(\mathcal{T})$: distribution over tasks} \;
randomly initialize $\theta$ \;
 \While{ not done}{
  \eIf{ {  Meta-Iteration $\leq \pi$ }}{
   Sample $I$ tasks $\mathcal{T}_{i} \sim p(\mathcal{T}) $\;
   }{
   {Sample $I$ multi-query tasks with total of $J$ unlabeled query sets $\mathcal{T}_{i} \sim p(\mathcal{T}) $\;
      }
  }
 \For{\textbf{ each }  support set $(X_S^i, Y_S^i)$:}{
Compute task-specific parameters $\theta_M^i$ using M steps of gradient descent over the support set loss\;

Use the adapted model to select query set for each support set
$(X_S^i, Y_S^i)$ using Eqn. \eqref{eq:score}\;
{Label the selected query sets }
 }
 Update global parameter $\theta$ using query set loss 
 {of  $I$ selected query sets as}
 $\theta \xleftarrow{} \theta - \beta \sum_{i=1}^{I}\nabla_\theta L\big[{f(\theta_{M}^{i},X_Q^i), Y_Q^i } \big]$\;
 }
\end{algorithm}
\section{Additional Experiments and Ablation Study}\label{sec:additional_exp_and_ablation}
In this section, we first present additional results on label-efficient meta-learning. We then present illustrative examples of multidimensional belief quantification that demonstrate the usefulness of our model. Afterwards, we present our model's performance on any-shot datasets and a subset of Meta-Dataset. Finally, we present an ablation study to study the impact of incorrect belief regularization.

\subsection{More Results on Label-Efficient Mata-Learning}
To demonstrate the effectiveness of using multidimensional belief-based uncertainty measure for task selection, we consider a limited label meta-learning setting. We evaluate the models on a limited labeled budget scenario with a total of 10,000 tasks (each task has 1 instance/class in the query set for omniglot and 2 instances/class for all other datasets). We formulate the task as a multi-query task (with 8 query sets in each task). The baseline MAML model randomly selects the query set to be labeled. VERSA, an uncertainty-aware meta-learning model requests labels for the most informative query set based on the estimated query set uncertainty. Moreover, we extend MetaSGD to be evidential and uncertainty-aware using our proposed approach described in Section \ref{sec:complexity_analysis_and_tr} (referred to as UA-MetaSGD). Both Units and UA-MetaSGD determine the task to be labeled based on the query set informativeness using Equation \eqref{eq:score}. All models are trained with a batch size of 2 for a total of 5000 iterations. Tables \ref{tab:limitedLabelOmni} and \ref{tab:limitedLabelsRest} show the results of the limited labeling budget experiments where the models with task selection show a clear advantage over the models without task selection especially when learning from a limited number of tasks.

\begin{table}[h]
\centering
\small
    \caption{Meta-learning Performance Comparison under Limited Labeling budget Scenario - Omniglot
    }
    \label{tab:limitedLabelOmni}
\begin{tabular}{|p{0.20\textwidth}p{0.14\textwidth}p{0.14\textwidth}p{0.16\textwidth}|} 
\hline
{\bf Omniglot} 5w 1s&4000 Tasks&8000 Tasks & 10,000 Tasks\\
\hline
MAML &73.44&80.18&85.56\\
Versa NTS&74.30&85.70&88.00\\
Versa TS&73.97&85.80&88.23\\
UA-MetaSGD NTS &85.06&88.64&89.76\\
UA-MetaSGD TS &87.08&90.50&91.62\\
Units-NTS &92.38&96.13&96.98\\
Units-ML &95.00&97.90&98.23\\
\hline
{\bf Omniglot} 5w 5s&4000 Tasks&8000 Tasks & 10,000 Tasks\\
\hline
MAML &92.00&96.26&96.38\\
Versa NTS&83.97&91.93&94.03\\
Versa TS&84.97&93.30&93.60\\
UA-MetaSGD NTS &93.28&94.96&95.38\\
UA-MetaSGD TS &94.32&97.00&97.48\\
Units-NTS &98.13&98.43&98.46\\
Units-ML &98.86&99.23&99.16\\
\hline
{\bf Omni} 20w 1s&4000 Tasks&8000 Tasks & 10,000 Tasks\\
\hline
MAML &65.24&72.41&73.91\\
Versa NTS&70.24&78.33&80.52\\
Versa TS&72.15&79.93&81.85\\
UA-MetaSGD NTS &61.94&71.34&71.44\\
UA-MetaSGD TS &62.38&71.07&72.51\\
Units-NTS&73.35&75.88&83.56\\
Units-ML&79.17&78.60&83.45\\
\hline
{\bf Omni} 20w 5s&4000 Tasks&8000 Tasks & 10,000 Tasks\\
\hline
MAML &86.62&89.75&88.26\\
Versa NTS &85.05&89.65&90.67\\
Versa TS &84.55&89.19&90.60\\
UA-MetaSGD NTS &73.65&78.82&79.94\\
UA-MetaSGD TS  &75.09&79.52&81.74\\
Units-NTS &91.32&94.05&95.66\\
Units-ML &93.20&94.16&96.61\\
\hline
\hline
\end{tabular}
\end{table}

\begin{table}[h!]
\centering
\small
    \caption{Meta-learning Performance Comparison under Limited Label Budget - CifarFS and mini-ImageNet
    }
    \label{tab:limitedLabelsRest}
\begin{tabular}{|p{0.20\textwidth}p{0.14\textwidth}p{0.14\textwidth}p{0.16\textwidth}|} 
\hline

\hline
{\bf CifarFS} 5w 5s&4000 Tasks&8000 Tasks & 10,000 Tasks\\
\hline
MAML &29.72&30.14&29.95\\
Versa NTS &32.60&40.11&42.91\\
Versa TS &33.38&42.06&43.75\\
UA MetaSGD NTS &52.18&54.38&58.30\\
UA MetaSGD TS &52.93&55.98&58.30\\
Units-NTS &53.87&58.37&61.30\\
Units-ML &55.88&60.53&61.39\\
\hline
{\bf \textit{mini}-ImageNet} 5w 5s&4000 Tasks&8000 Tasks & 10,000 Tasks\\
\hline
MAML &27.23&33.36&35.74\\
Versa NTS &39.45&45.16&45.48\\
Versa TS &37.86&45.48&46.21\\
MetaSGD NTS &44.88&48.95&51.69\\
MetaSGD TS  &45.51&50.35&51.54\\
Units-NTS &43.21&48.32&49.54\\
Units-ML &47.33&48.34&54.23\\
\hline
\hline
\end{tabular}
\end{table}

\subsection{Illustrative Examples of Predicted Multidimensional Belief} \label{empirical_validation_proof}

\paragraph{Mulit-Query Tasks. }We present some qualitative results with a 5-way 2-shot \textit{mini}-ImageNet Multi-Query tasks in Figure~\ref{fig:multi_query_task_illustration} to demonstrate the multidimensional belief characteristics for meta-learning. We assume that we have a limited labeling budget and can label only one of the two query sets. After learning on the support set, for query set 1 (Q1), the model can confidently predict the class labels with low overall task level uncertainties (both vacuous belief $vb$ and conflicting belief $cb$). Q1 may not contribute much to the learning of the global parameters as the query set contains little new knowledge (indicated by low vacuous belief) and the model's class discriminating capabilities seem to be accurate (as indicated by low conflicting belief). If we consider query set 2 (Q2), then the model is highly uncertain about the predictions, with comparatively higher conflict in beliefs and a higher lack of confidence. Labeling Q2 to train the meta-learning model is likely to lead the meta-learning model to better generalization and label-efficient meta-learning.


\begin{figure}[ht!] 
\centering
\begin{subfigure}{0.46\textwidth}
  \centering
  \includegraphics[width=\linewidth]{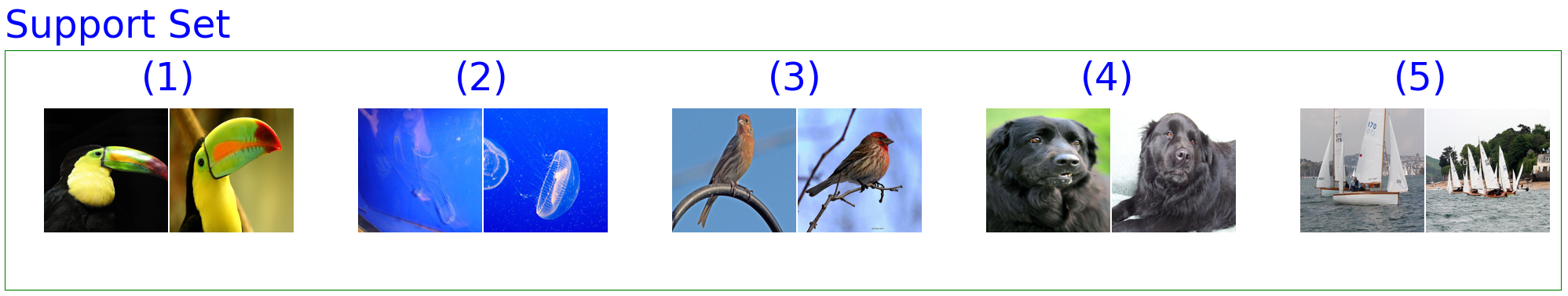}
  \end{subfigure}
\begin{subfigure}{0.46\textwidth}
  \centering
  \includegraphics[width=\linewidth]{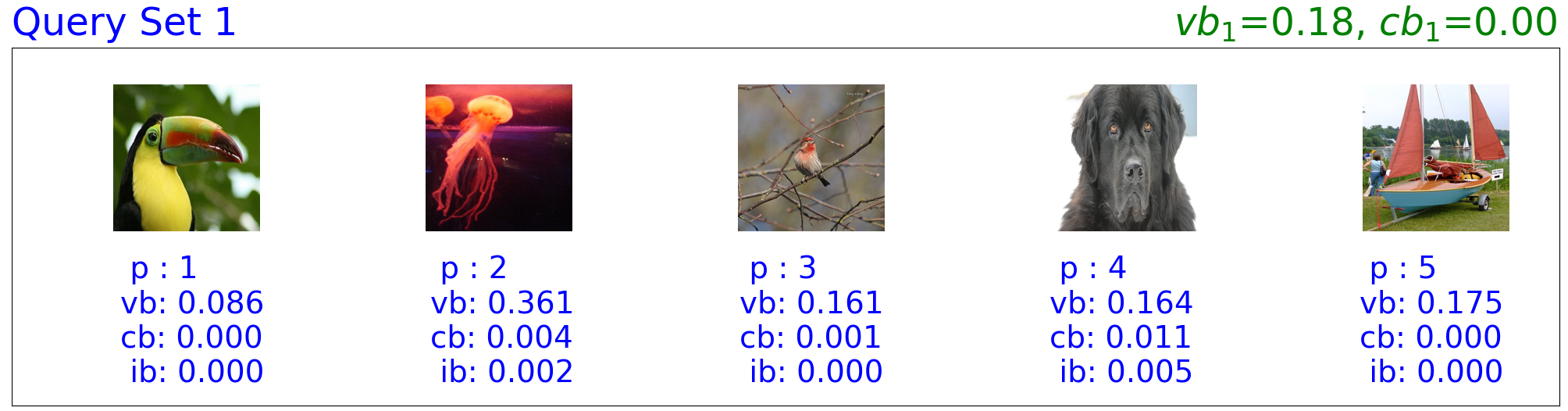}
\end{subfigure}
\begin{subfigure}{0.46\textwidth}
  \centering
  \includegraphics[width=\linewidth]{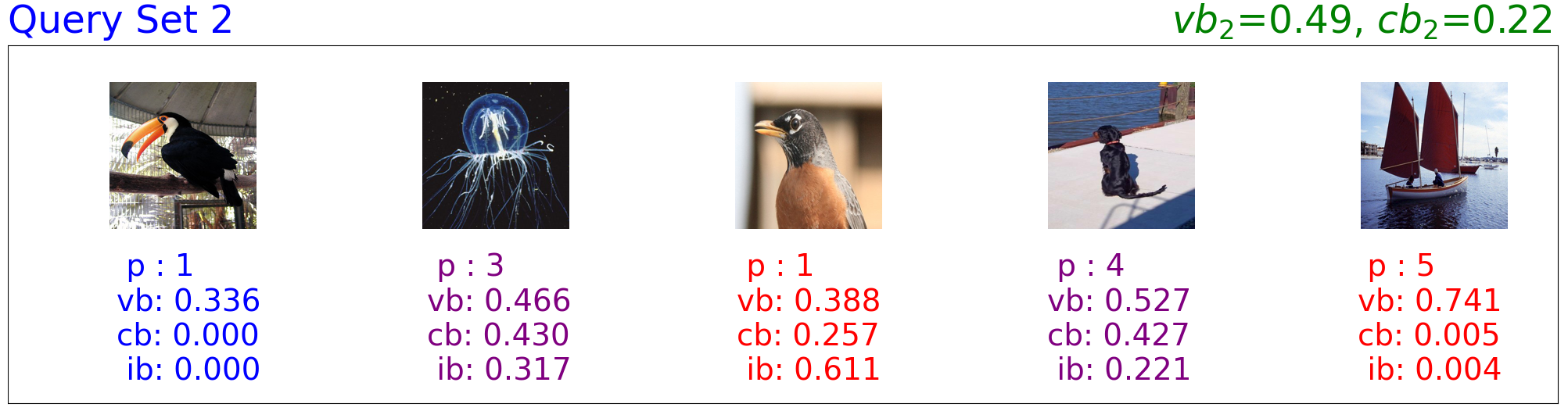}
\end{subfigure}
\vspace{-2mm}
\caption{
Multi-Query Task for Active Task Selection.} 
\label{fig:multi_query_task_illustration}
\vspace{-2mm}
\end{figure} 

\paragraph{Uncertainty Prediction. }We present some additional illustrative examples demonstrating our model's uncertainty prediction capabilities. We trained our model on $5$-way $5$-shot \textit{mini}-ImageNet task and observed the model's behavior on $5$-way tasks. Figure~\ref{illustrative1} shows the model's performance on a $5$-way $1$-shot task. Since each class in the support set has just 1 image/class, there might not be enough evidence in the support set to correctly predict all query set instances. This is reflected by the large model vacuity and dissonance for the query set instances. Further, due to limited evidence in the support set, the predictions for some query instances are wrong. For example, the wolf image in the query set is predicted as a lion. It may be because of the greater match of the orientation of the two animals. As we add instances in the support set that are helpful for the model to correctly classify the query set (see Figures~\ref{illustrative2}, \ref{illustrative3}), the model starts to become be both confident in its prediction as well as correct its prediction indicated by a decrease in the query instance vacuity and dissonance. Further, the vacuity can be useful to detect open-set/OOD instances as shown in Figure~\ref{illustrative4}.

\begin{figure}[h]
        \includegraphics[width=0.56\textwidth]{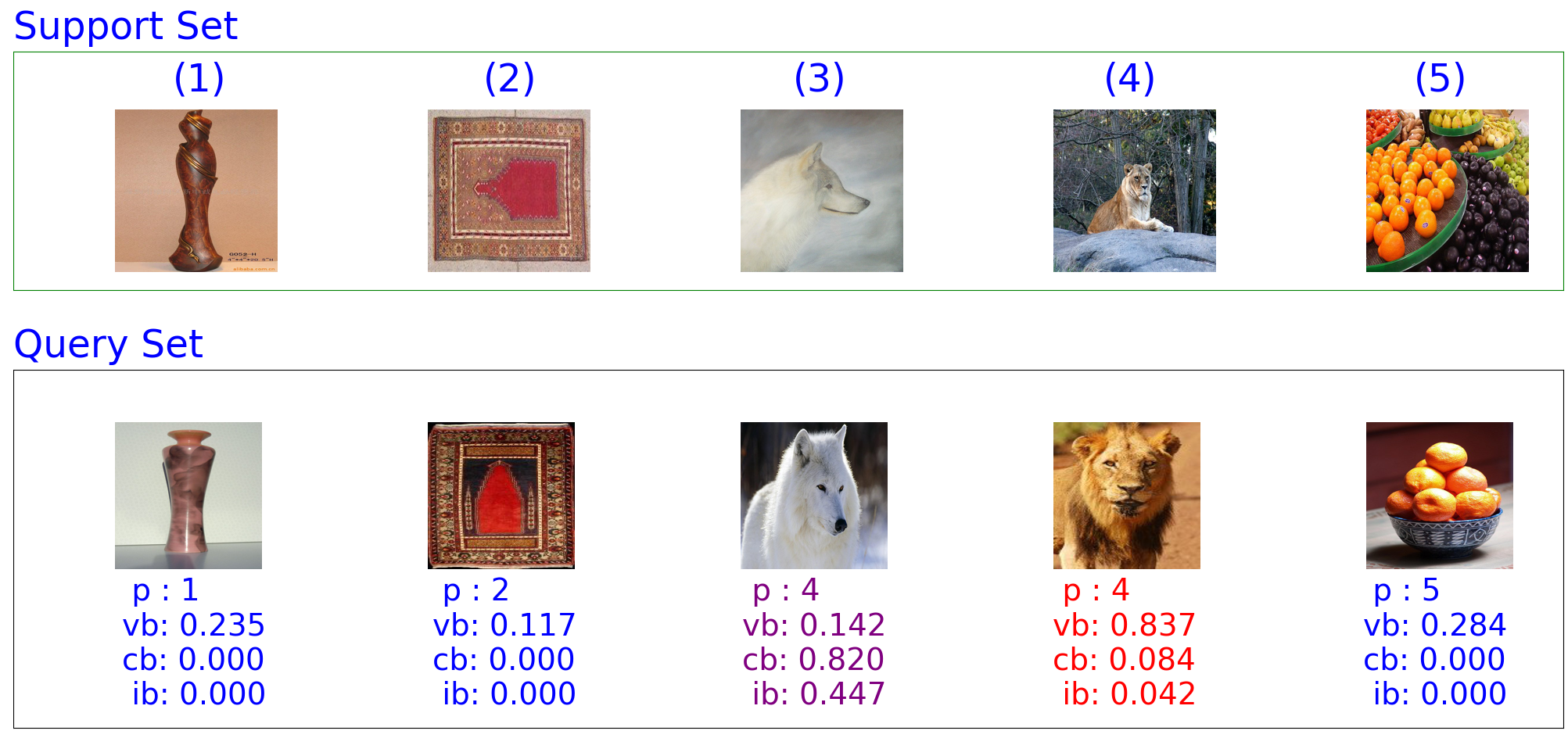}
        \centering
        \caption{Uncertainty prediction in a 5-w 1-s task.
        }
        \label{illustrative1}
\end{figure}
\begin{figure}[h]
        \includegraphics[width=0.56\textwidth]{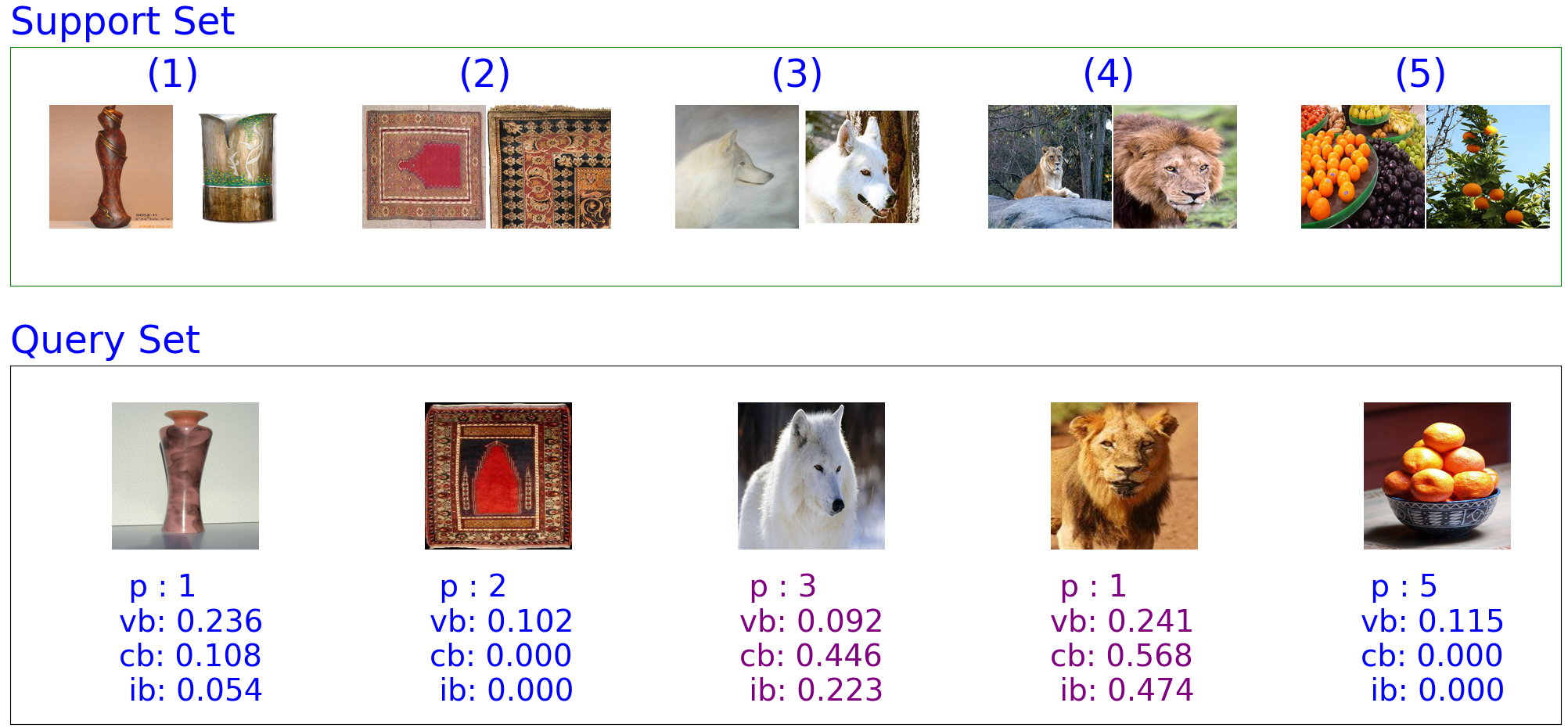}
        \centering
        \caption{Uncertainty prediction in a 5-w 2-s task.
        }
        \label{illustrative2}
\end{figure}
\begin{figure}[h]
        \includegraphics[width=0.56\textwidth]{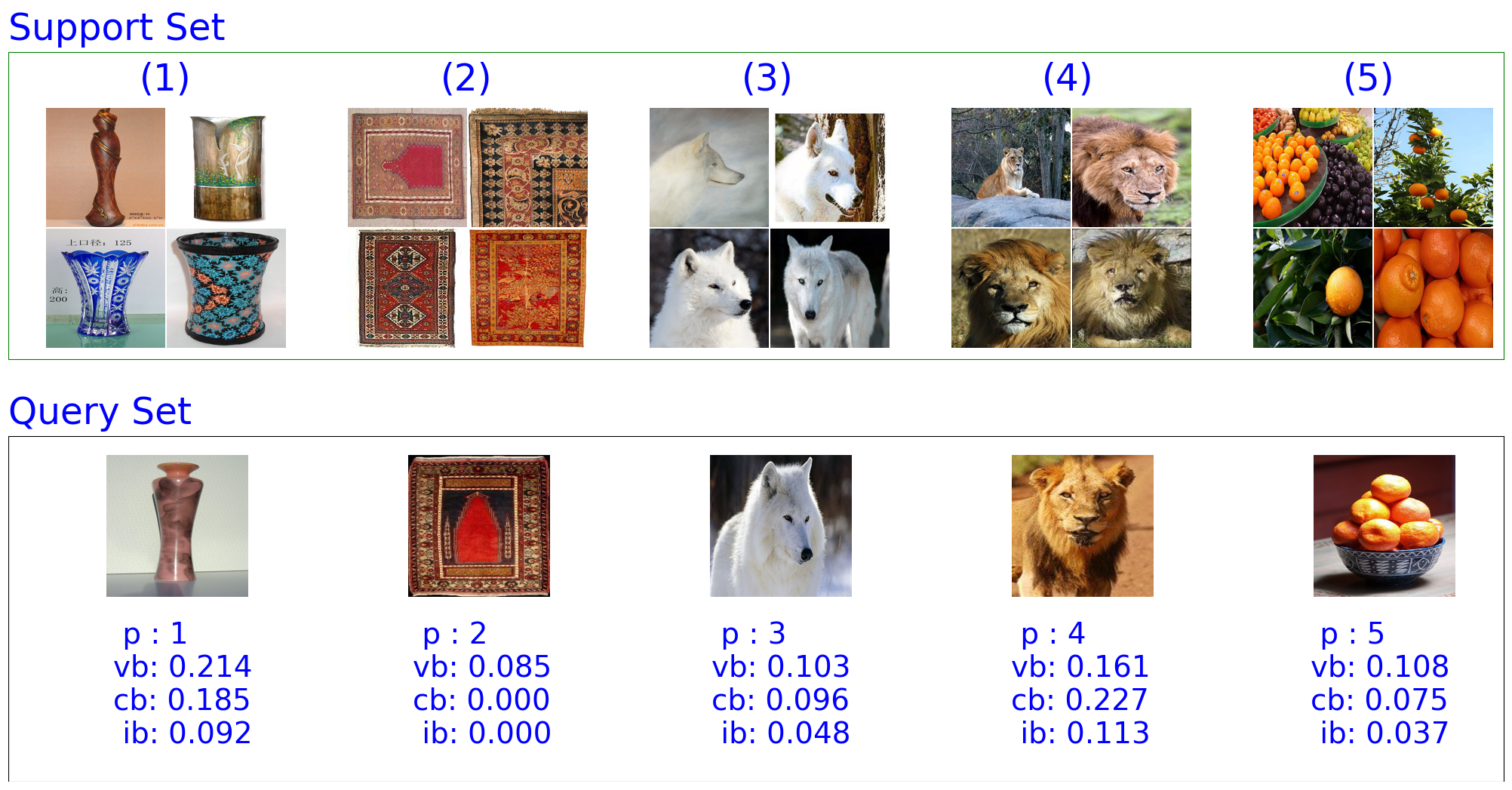}
        \centering
        \caption{Uncertainty prediction in a 5-w 4-s task.
        }
        \label{illustrative3}
\end{figure}
\begin{figure}[h]
        \includegraphics[width=0.56\textwidth]{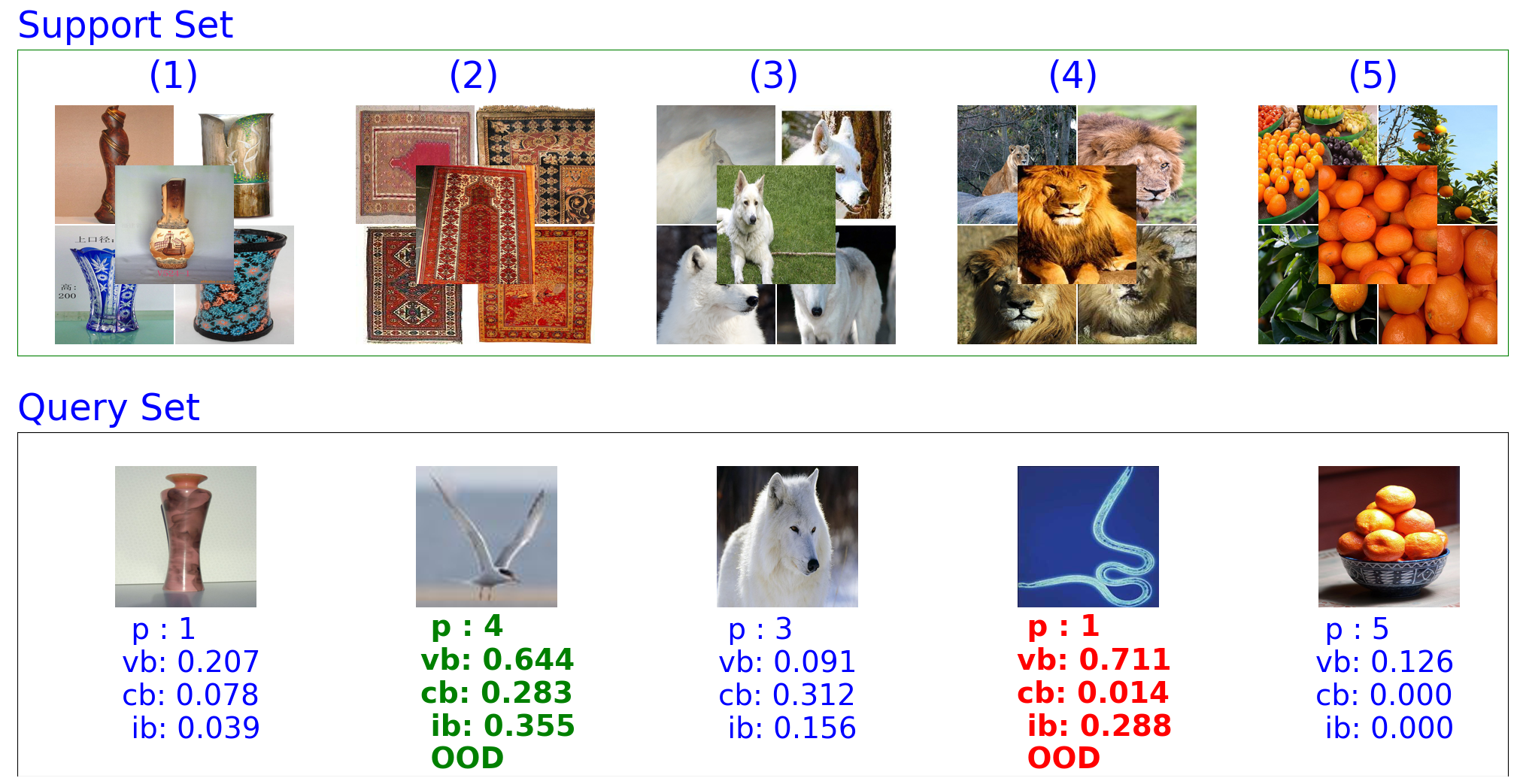}
        \centering
        \caption{Uncertainty prediction in a 5-w 5-s \textit{mini}-ImageNet test task with instances OOD instances in the query set.
        }
        \label{illustrative4}
\end{figure}

\subsection{Effectiveness of Predicted Multidimensional Belief} \label{empirical_validation_proof}
\paragraph{Potential for OOD detection}
We perform experiments over 5-way 1-shot tasks on Omniglot to further demonstrate Units-ML's potential for Out-of-Distribution (OOD) detection.  We train the model on a clean Omniglot dataset for 100 epochs and evaluate the model on 600 test tasks with query set samples rotated by various angles. Table \ref{tab:appendix_omni_5w_results} shows the model's performance on the query set after training for 100 epochs.  Figure~\ref{fig:rotateOnOmni} shows the model's accuracy versus the predicted vacuity for different rotations of query set images. The accuracy drops with larger rotations on query images. Interestingly, vacuity increases proportionally which can be interpreted as: The model is aware of the shift in the distribution of the query set samples. We observe similar behavior with the scaling of the query set instances as shown in Figure~\ref{fig:scaleOnOmni}. Furthermore, in Figure~\ref{fig:rotateOnOmni}, due to the special nature of the Omniglot images (i.e., characters), some of the images (e.g., I,H,O,N,S,X,Z) are less sensitive to a 180$^{\circ}$ rotation. The model accurately recognizes this and reports a low vacuity around that angle. 
\begin{figure}[h]
\centering
\begin{minipage}{0.49\textwidth}
\label{fig:omni_scale_rot_ablation}
\centering
\begin{subfigure}{0.49\textwidth}
  \centering
  \includegraphics[width=\linewidth]{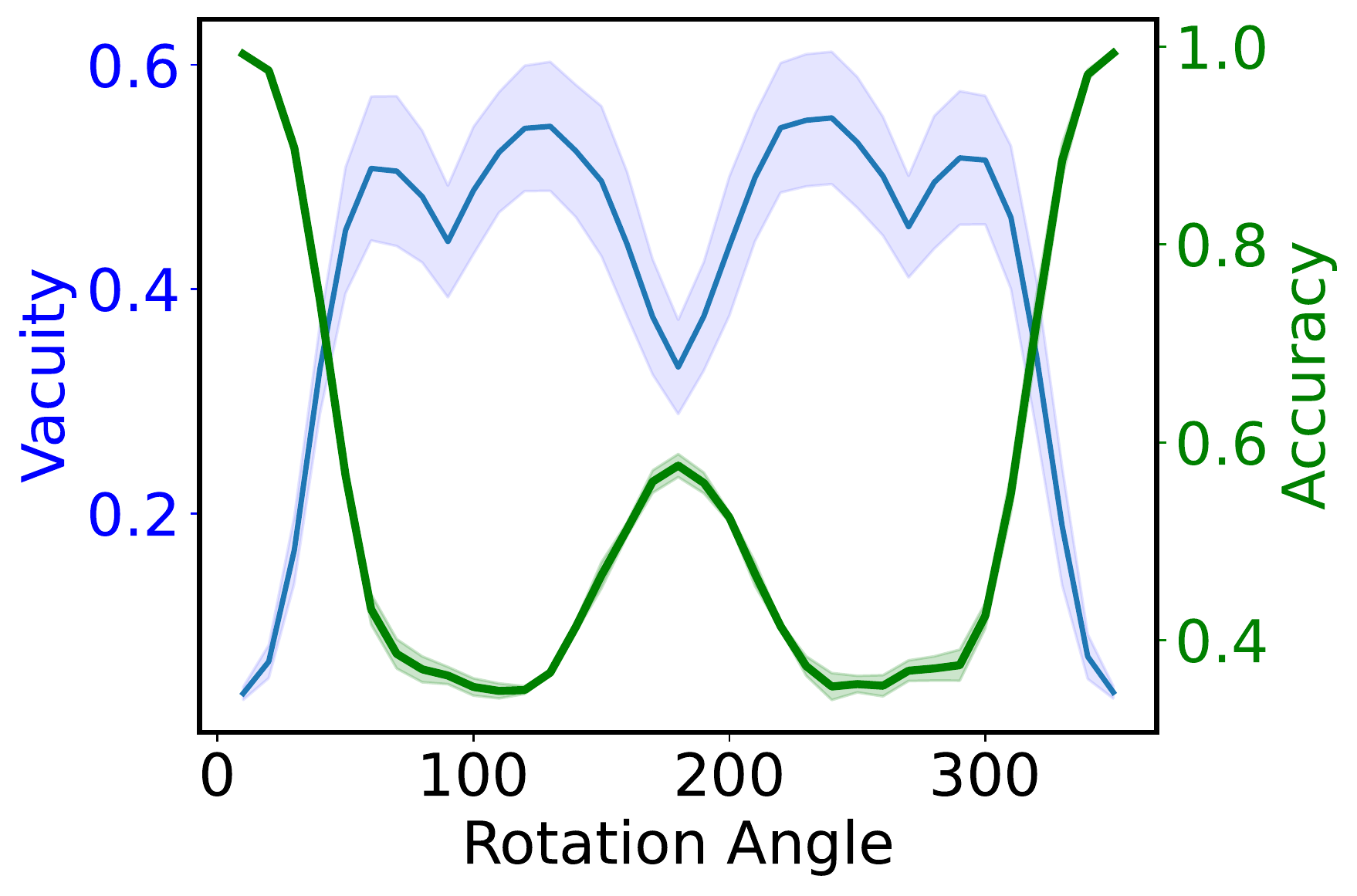}
  \caption{Rotated query set}
  \label{fig:rotateOnOmni}
\end{subfigure}
\begin{subfigure}{0.49\textwidth}
  \centering
 \includegraphics[width=\linewidth]{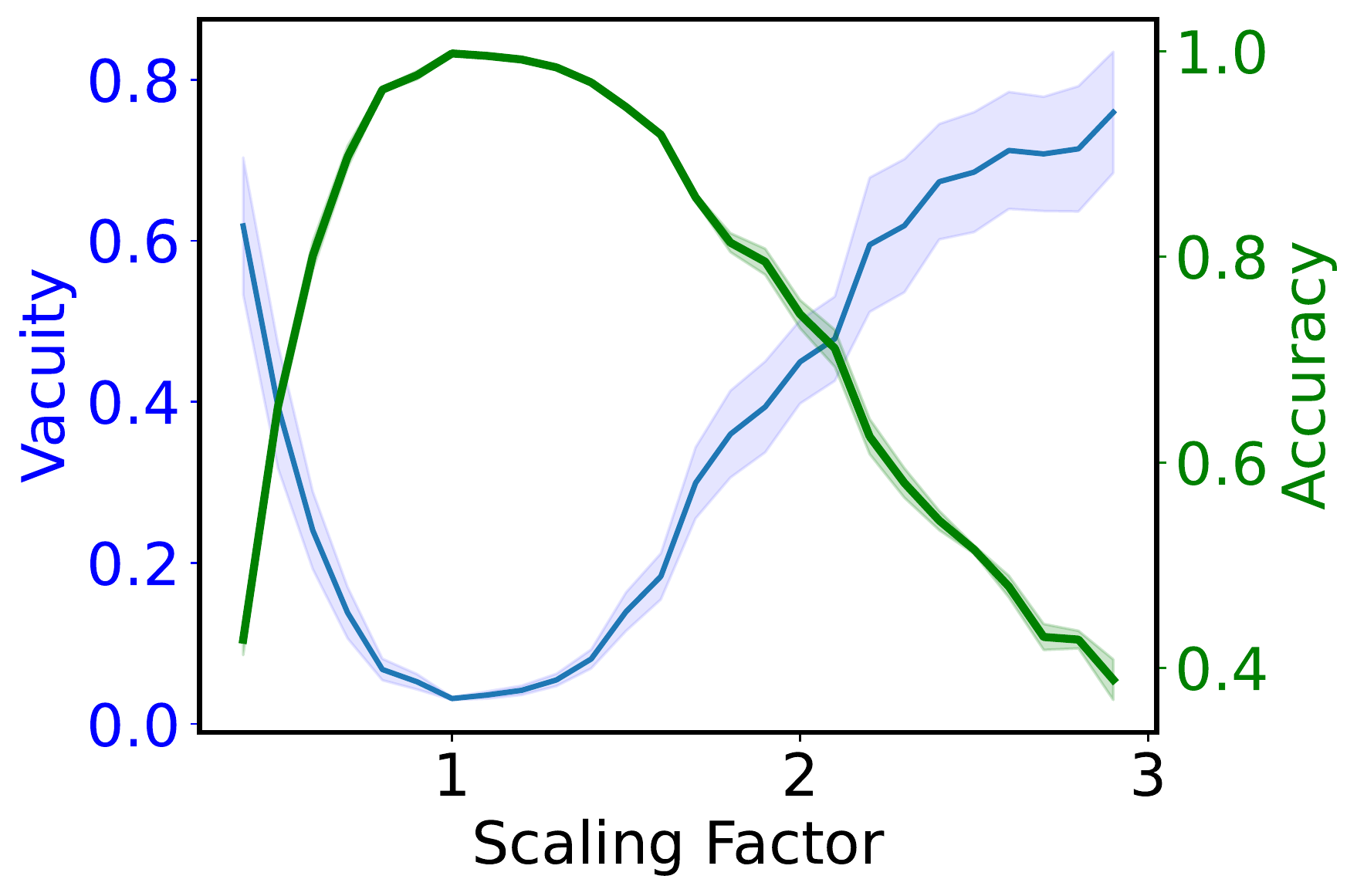}
  \caption{Scaled query set}
  \label{fig:scaleOnOmni}
\end{subfigure}
\vspace{-2mm}
\caption{Vacuity and accuracy trends in OOD detection}
\end{minipage}
\begin{minipage}{0.46\textwidth}
\captionsetup{type=table} 
\begin{tabular}{|c|c|c|} 
\hline
{\bf Omniglot}&5-way 1-shot(\%)&5-way 5-shot(\%)\\
\hline \hline
MAML&98.7$\pm$0.4&99.1$\pm$0.1\\
Reptile&97.68$\pm$0.04&99.48$\pm$0.06\\
VERSA&99.70$\pm$0.20&99.75$\pm$0.13\\
\hline
Units-NTS&99.20$\pm$0.21&99.66$\pm$0.08\\ 
Units-ML&99.59$\pm$0.06&99.83$\pm$0.01\\[0.1ex] 
\hline
\end{tabular}
\caption{Meta Learning Performance Comparison}\label{tab:appendix_omni_5w_results}
\end{minipage}
\end{figure}

\paragraph{Dataset wide statistics for OOD.} 
We conduct additional experiments with 5-way 5-shot CifarFS-Aircraft and \textit{mini}-ImageNet-CUB settings, where we consider the averaged performance across 600 test tasks. After training models on clean tasks from CifarFS and \textit{mini}-ImageNet datasets, we compare the model's performance on the OOD query set constructed from Aircraft/CUB datasets with the In-distribution (InD) query set instances from CifarFS/\textit{mini}-ImageNet, respectively. Specifically, during the test phase, we consider the support set from CifarFS/\textit{mini}-ImageNet datasets and evaluate on the InD and OOD query sets. On average, the vacuity of InD query set is considerably lower than the vacuity of the OOD query set, which further justifies the potential of our model for OOD detection.
\begin{table}[htpb]
    \centering
    \vspace{-2mm}
{
\begin{tabular}{|c| c c c|} 
 \hline
 Dataset&Accuracy&InD Vacuity&OOD Vacuity\\
 \hline
 
 CifarFS&76.69$\%$&0.18&0.45\\ 
 \textit{mini}-ImageNet&68.16$\%$&0.29&0.48\\ 
\hline
\end{tabular}}
\vspace{-2mm}
\end{table}

\paragraph{Comparison with VERSA} 
We also compare our Units-ML model with VERSA, another uncertainty aware model at different uncertainty thresholds for CifarFS dataset. We consider 5-way 5-shot CifarFS tasks where we use the output uncertainty from the two models to obtain top $T\%$ confident predictions over the query set of 600 test tasks and compare the performance. We use the variance of the predictions in VERSA to estimate the uncertainty. Units-ML achieves a higher prediction accuracy in all cases, which suggests that Unit-ML's predicted evidence-based uncertainty is more trustworthy. The VERSA model requires computationally expensive sampling to quantify uncertainty for each query set prediction. Moreover, using ideas from our work, the VERSA model can also be extended to be an computationally efficient evidential meta-learning model. We leave this extension as a future work.  
\begin{table}[htpb]
    \centering
    \label{tab:versaVsUnitsRebuttal}
    \vspace{-3mm}{
\begin{tabular}{|c|c c c c |} 
 \hline
 Model&T=100$\%$&T=$70\%$&T=$60\%$&T=$50\%$\\
 \hline
VERSA&74.69&78.33&81.41&84.33\\ 
Units-ML&76.50&83.26&86.36&87.23\\ 
\hline
\end{tabular}}
\vspace{-3mm}
\end{table}
\newpage
\subsection{Any-Shot and Multi-Dataset Experiments}
We also evaluated our Units model with any-shot classification tasks and multi-dataset settings using task/experiment setup as described in Lee et al. \cite{lee2020learning}. In any-shot experiments, we trained and evaluated on 5-way any-shot tasks with 15 instances/class in the query set having both class and task imbalance. A \textit{mini}-ImageNet trained model was meta-tested on \textit{mini}-ImageNet and CUB whereas a CIFARFS trained model was meta-tested on CIFARFS and SVHN test tasks. In multi-dataset experiments, the model was meta-trained using uniformly sampled 10-way any-shot tasks from Aircraft, QuickDraw, and VGG-Flower datasets and evaluated on Fashion-MNIST and Traffic Signs along with Aircraft, QuickDraw, and VGG-Flower datasets (tasks from Fashion-MNIST and Traffic Signs are not available to the model during meta-training phase). The results of the any-shot and multi-dataset experiments are presented in Table \ref{tab:anyShotRes} and Table \ref{tab:multidatasetres}. For any-shot experiments, our model easily outperforms all the baselines except for Bayesian TAML \cite{lee2020learning}. In multi-dataset settings, the results are slightly different where our model outperforms all the baselines in three of the datasets: QuickDraw, Fashion-MNIST, and VGG-Flower. In the remaining two datasets, our model has comparable performance to other baselines and a slightly lower performance compared to Bayesian TAML. It is worth noting that Bayesian TAML is specifically designed to handle any-shot tasks with class and task imbalance but this is not the design goal of our model. Furthermore, as shown by Units-NTS 0.2/0.1 and Figure~\ref{fig:multi_vac_threshold_trend}, if we consider the uncertainty threshold, our model can outperform all the baselines in all the settings. 

\begin{table*}[h]
\centering
\small
    \caption{Any-Shot Setting Comparison
    }
    \label{tab:anyShotRes}
\begin{tabular}{|p{0.16\textwidth}p{0.22\textwidth}p{0.02\textwidth}|p{0.22\textwidth}p{0.02\textwidth}|} 
\hline
{\bf Meta-Training}&\textit{mini}-ImageNet&&CifarFS&\\
\hline 
\end{tabular}
\begin{tabular}{|p{0.16\textwidth}p{0.12\textwidth}p{0.12\textwidth}|p{0.12\textwidth}p{0.12\textwidth}|} 
\hline
{\bf Meta-Testing}&\textit{mini}-ImageNet&CUB&CifarFS&SVHN\\
\hline \hline
MAML&66.64&65.77&71.55&45.17\\
Meta-SGD&69.95&65.94&72.71&46.45\\
fo-Proto-MAML&68.96&61.77&71.80&40.16\\
Bayesian TAML&71.46&71.71&75.15&51.87\\
\hline
Units-NTS&71.70&67.95&76.76&52.11\\ 
Units-NTS 0.2&83.27&80.01&84.60&70.63\\ 
Units-NTS 0.1&92.18&88.82&92.00&81.47\\ 
\hline
\end{tabular}
\end{table*} 
\begin{table*}[h!]
\centering
\small
    \caption{Multi-Dataset Setting Comparison
    }
    \label{tab:multidatasetres}
\begin{tabular}{|p{0.16\textwidth}p{0.39\textwidth}p{0.03\textwidth}p{0.03\textwidth}p{0.12\textwidth}p{0.03\textwidth}|}
\hline {\bf Meta-Training}&Aircraft, QuickDraw, and VGG-Flower & & & &\\
\end{tabular}
\begin{tabular}{|p{0.16\textwidth}p{0.12\textwidth}p{0.12\textwidth}p{0.12\textwidth}|p{0.12\textwidth}|p{0.12\textwidth}|}
\hline
{\bf Meta-Testing}&Aircraft&QuickDraw&VGG-FLower&Traffic Signs&Fashion-MNIST\\
\hline \hline
MAML&48.60&69.02&60.38&51.96&63.10 \\
Meta-SGD&49.71&70.26&59.41&52.07&62.71 \\
fo-Proto-MAML&51.15&69.84&65.24&53.93&63.72\\
Bayesian TAML&54.43&72.03&67.72&64.81&68.94\\
\hline
Units-NTS&46.88&73.82&70.52&53.90&69.09\\
Units-NTS 0.2&54.64&82.00&76.84&61.74&74.49\\ 
Units-NTS 0.1&63.43&90.06&82.78&70.60&81.49\\
\hline
\end{tabular}
\end{table*} 

 \begin{figure}[h!]
\begin{subfigure}{0.19\textwidth}
  \centering
  \includegraphics[width=0.9\linewidth]{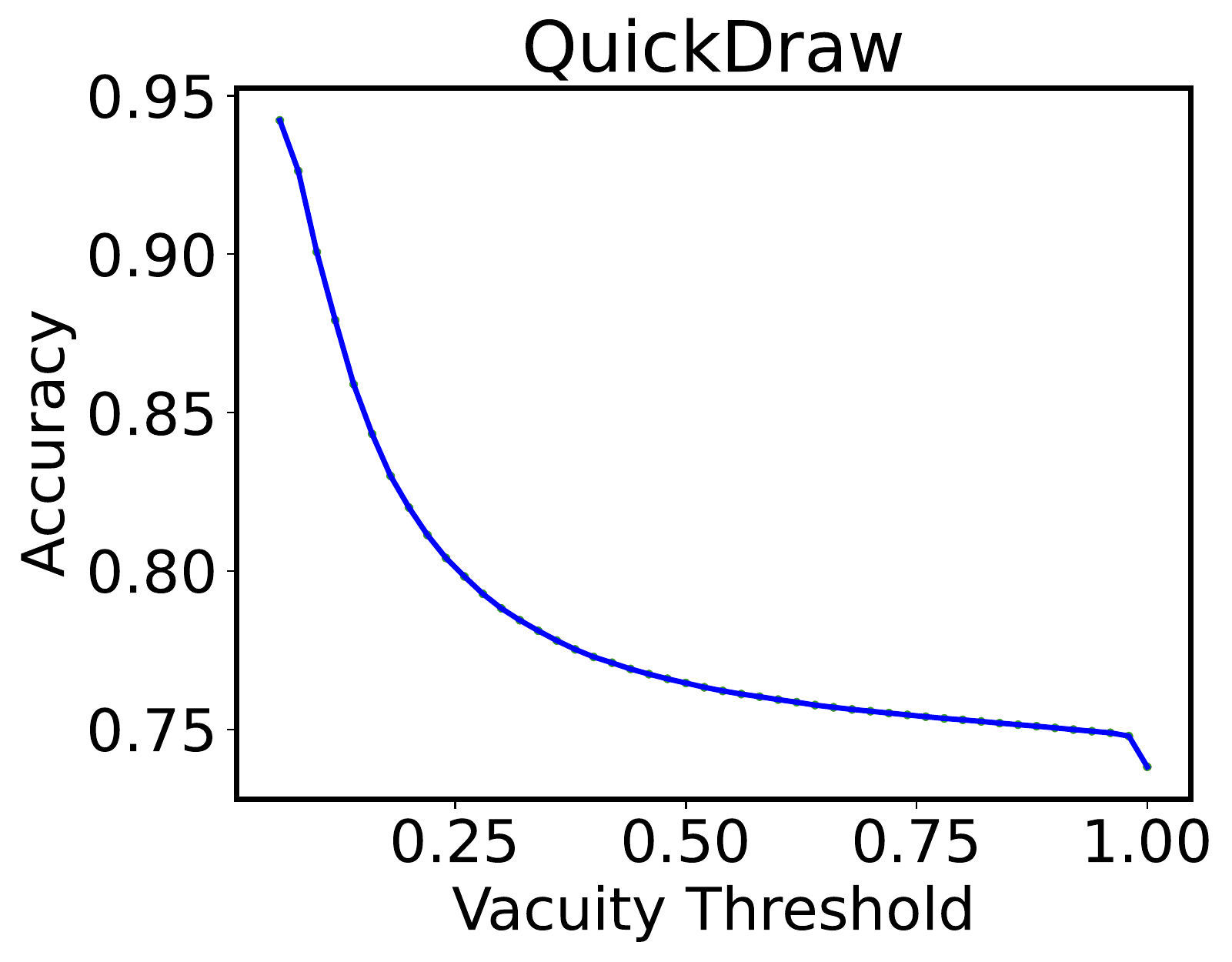}
  \caption{QuickDraw}
\end{subfigure}
\begin{subfigure}{0.19\textwidth}
  \centering
  \includegraphics[width=0.9\linewidth]{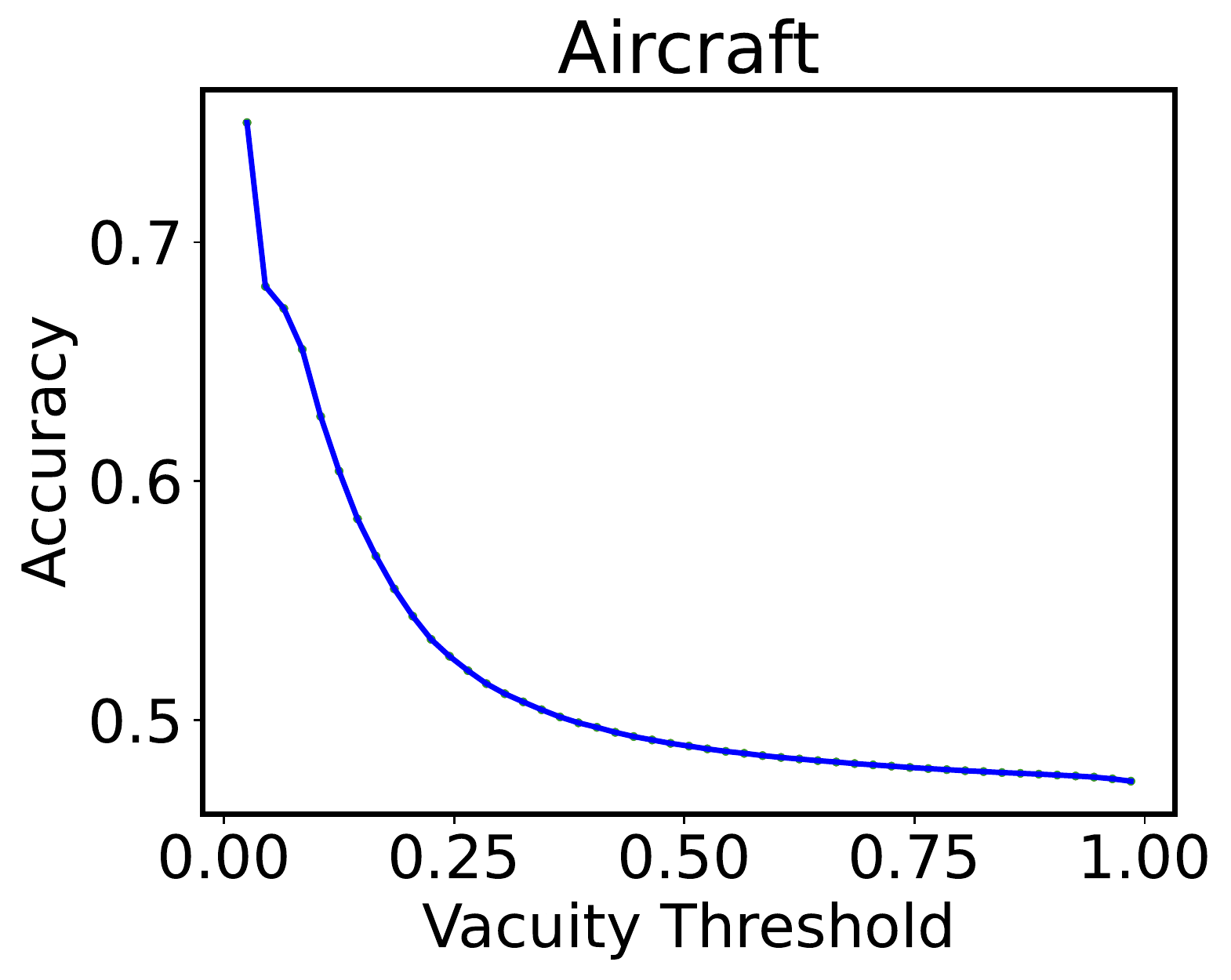}
  \caption{Aircraft}
\end{subfigure}
\begin{subfigure}{0.19\textwidth}
  \centering
  \includegraphics[width=0.9\linewidth]{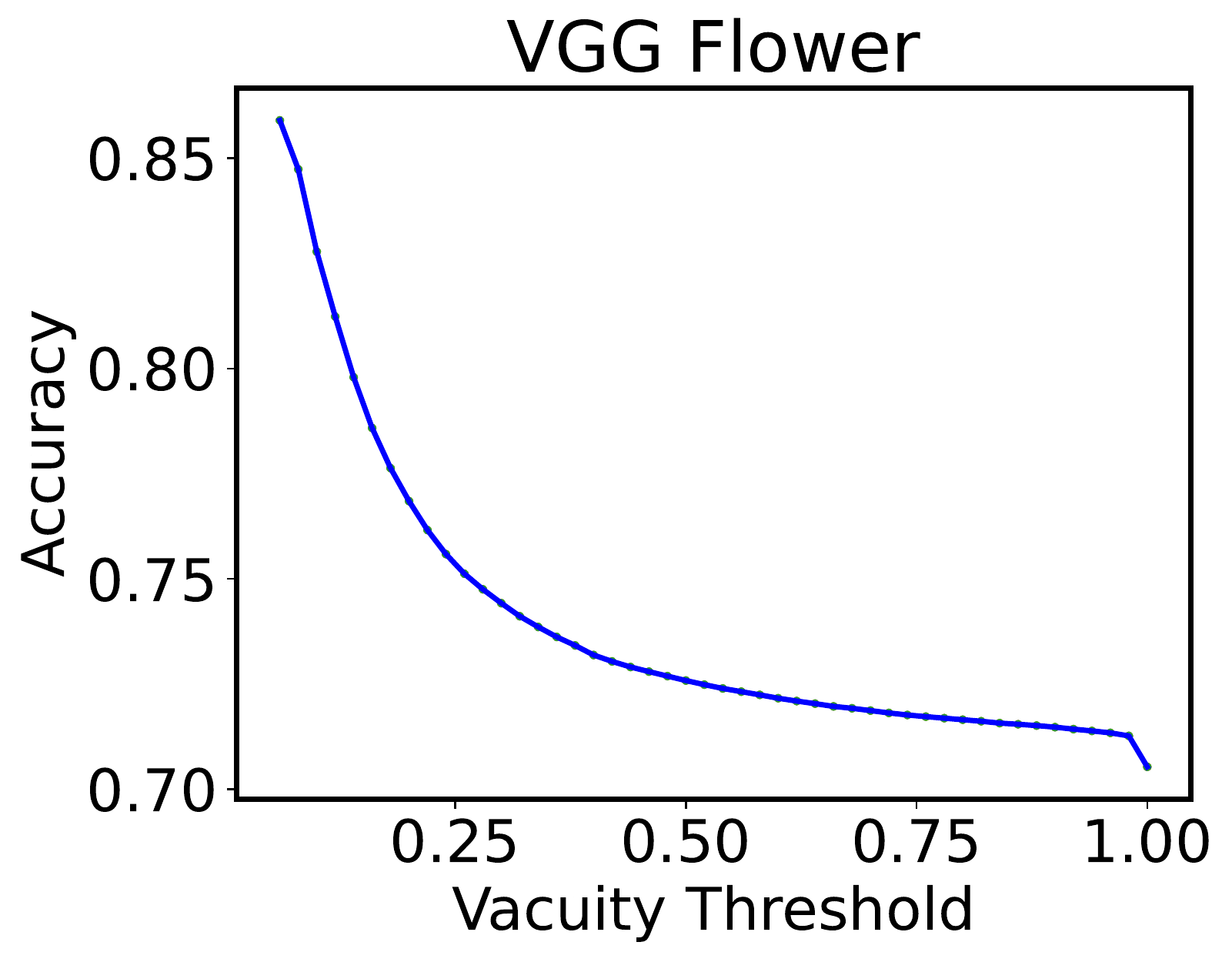}
  \caption{VGG Flower}
\end{subfigure}
\begin{subfigure}{0.19\textwidth}
  \centering
  \includegraphics[width=0.9\linewidth]{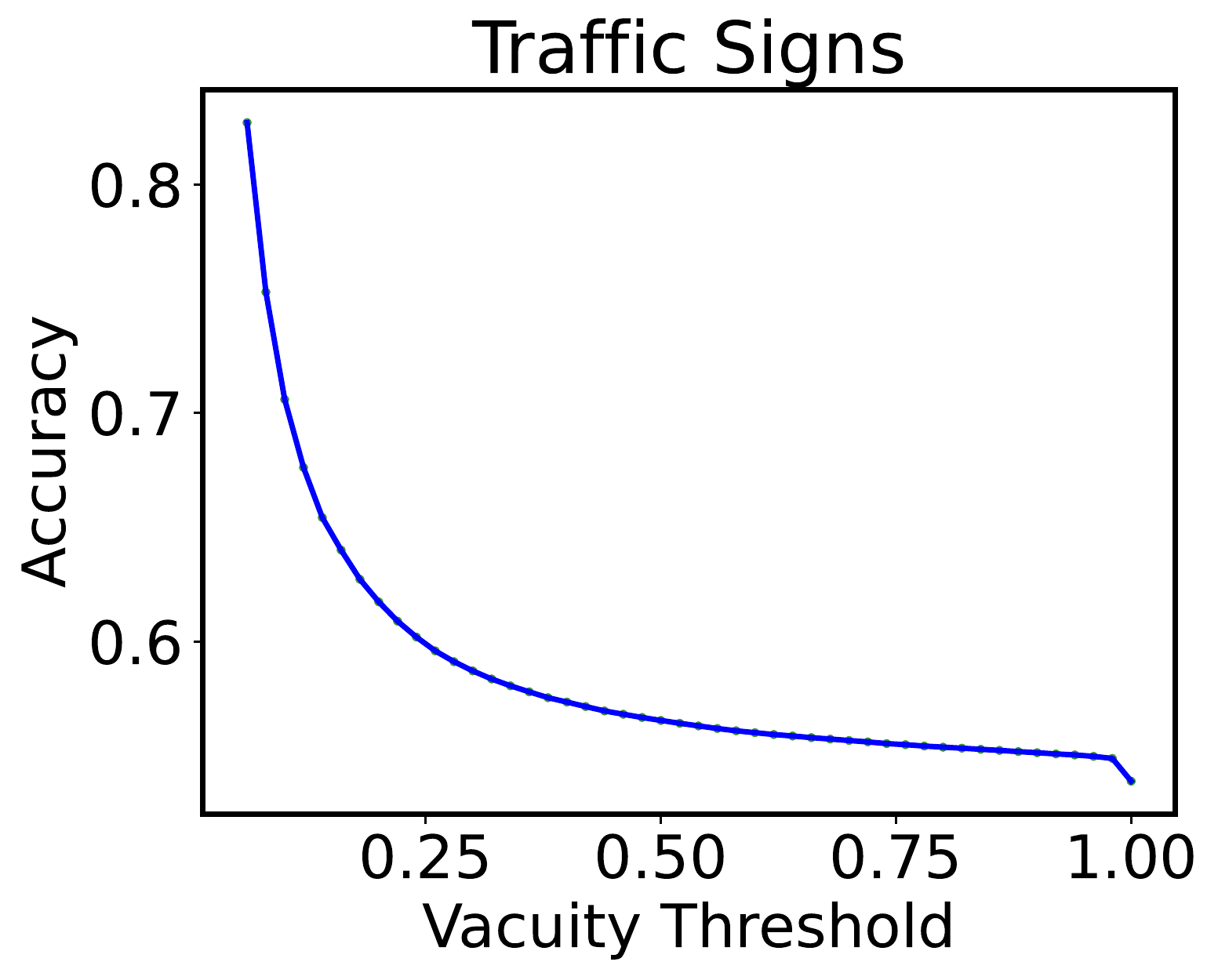}
  \caption{Traffic Signs}
\end{subfigure}
\begin{subfigure}{0.19\textwidth}
  \centering
  \includegraphics[width=0.9\linewidth]{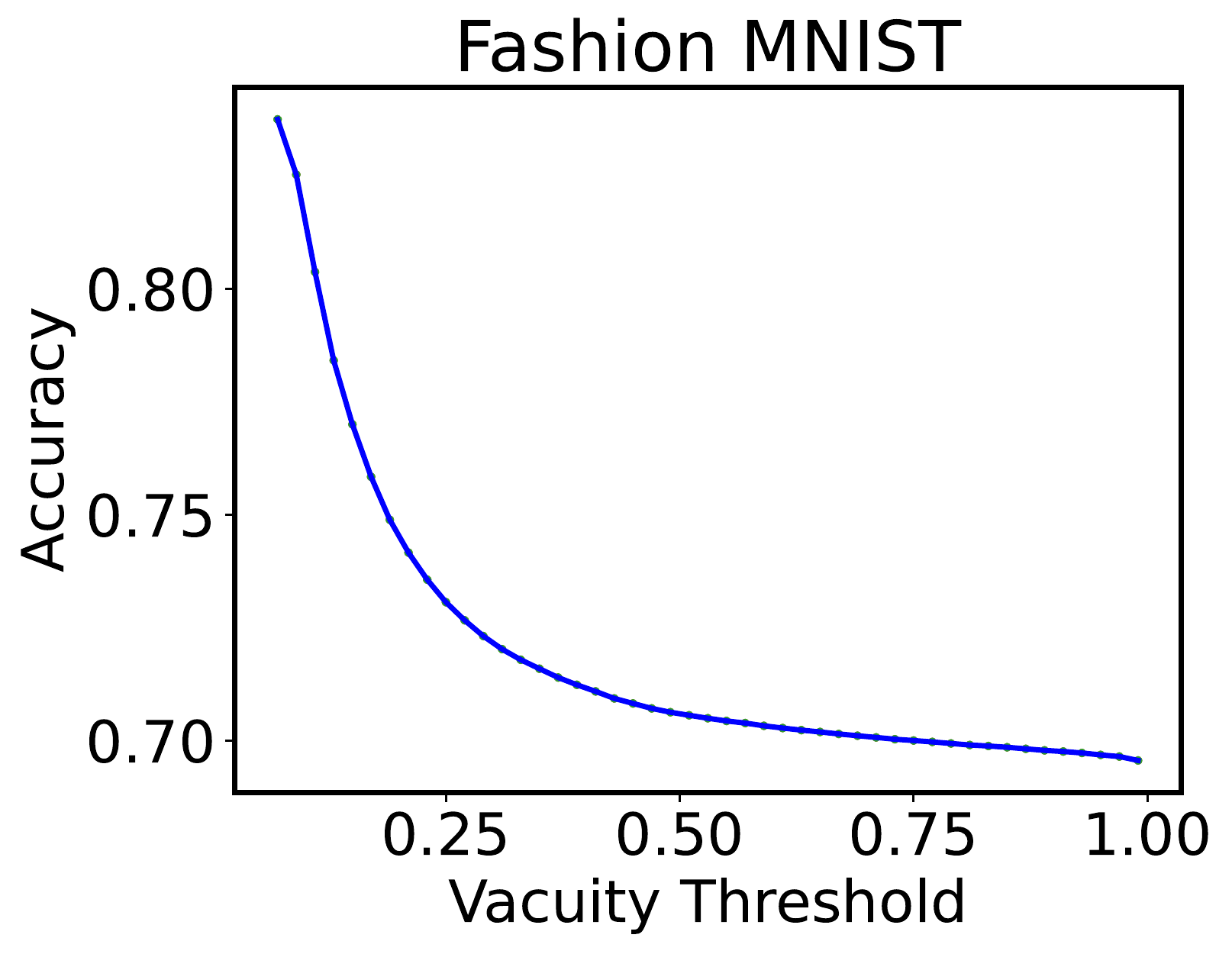}
  \caption{Fashion MNIST}
\end{subfigure}
\caption{
Impact of Vacuity Threshold for Different Datasets in Meta-Dataset}
\label{fig:multi_vac_threshold_trend}
\end{figure}

\subsection{Effect of Incorrect Belief Regularization}
 We add a belief regularization term (Eqn. \eqref{eq:appendix_overall_loss_sample}) to encourage our model to output low (ideally no) belief for classes other than the ground truth and ensure low incorrect belief. The effect of the regularization term is controlled by $\eta = \min(p, p * E/10)$ (Eqn. \eqref{eq:appendix_overall_loss_sample}) where we $p$ is a hyperparameter. Here, we study the impact of this hyperparameter on our model's accuracy, vacuous belief, and incorrect belief. Figure~\ref{fig:appendix_trend_eta_all} shows the impact of regularization on training accuracy, validation accuracy, vacuity, and incorrect belief for a 5-way 1-shot CifarFS experiment. If there is low/no regularization, then the model outputs high confidence even for wrong predictions as indicated by large incorrect beliefs. When the incorrect belief regularization dominates the loss, the model outputs high vacuity for all tasks and the model fails to train properly. The model shows the best performance when there is a good balance in penalizing incorrect belief (through belief regularization) and encouraging large correct belief (through the loss term).
 \begin{figure}[h!]
\centering
\begin{subfigure}{0.24\textwidth}
  \centering
  \includegraphics[width=\linewidth]{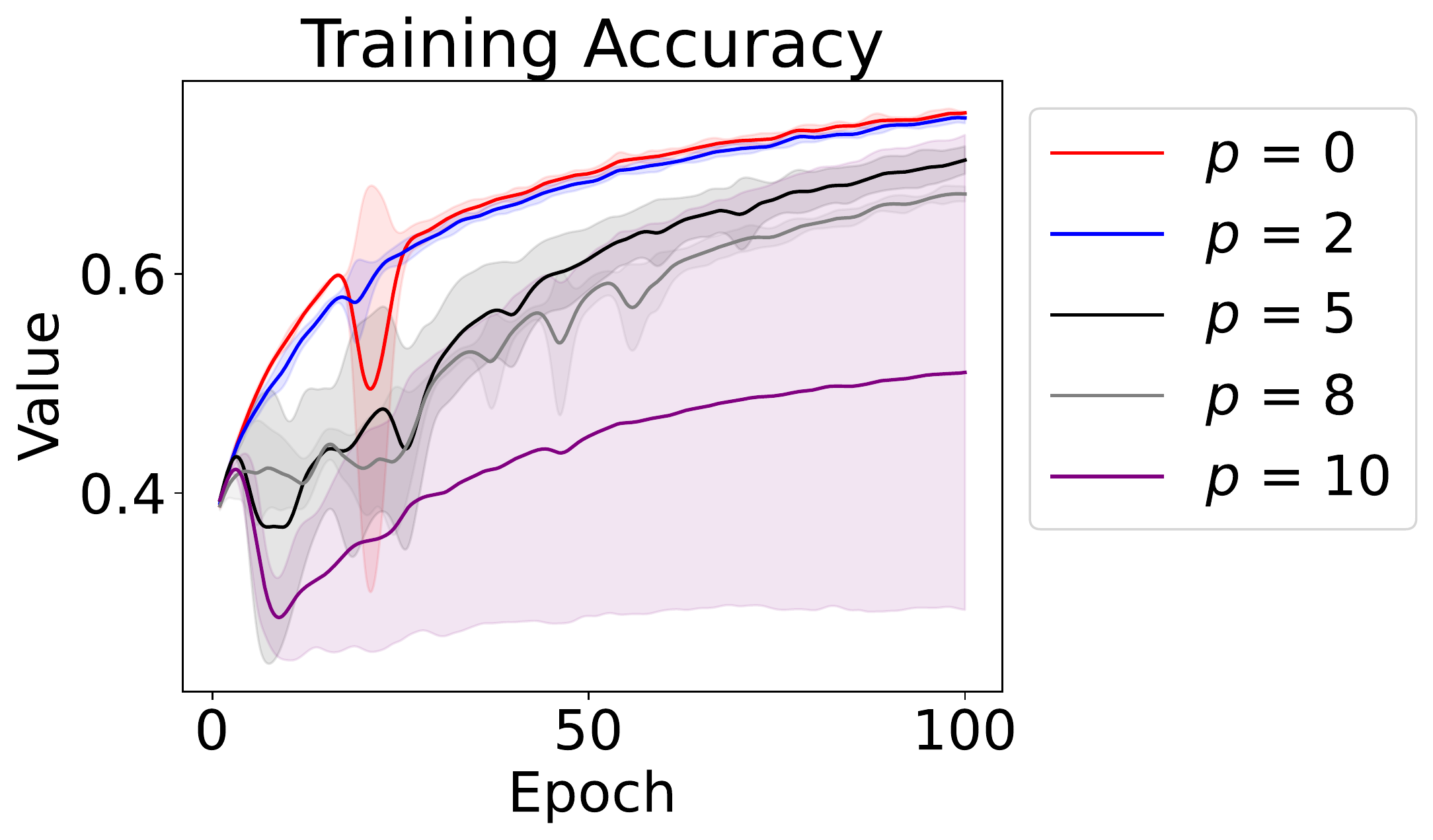}
  \caption{Training}
  \label{fig:appendix_tr_eta_trend}
\end{subfigure}
\begin{subfigure}{0.24\textwidth}
  \centering
  \includegraphics[width=\linewidth]{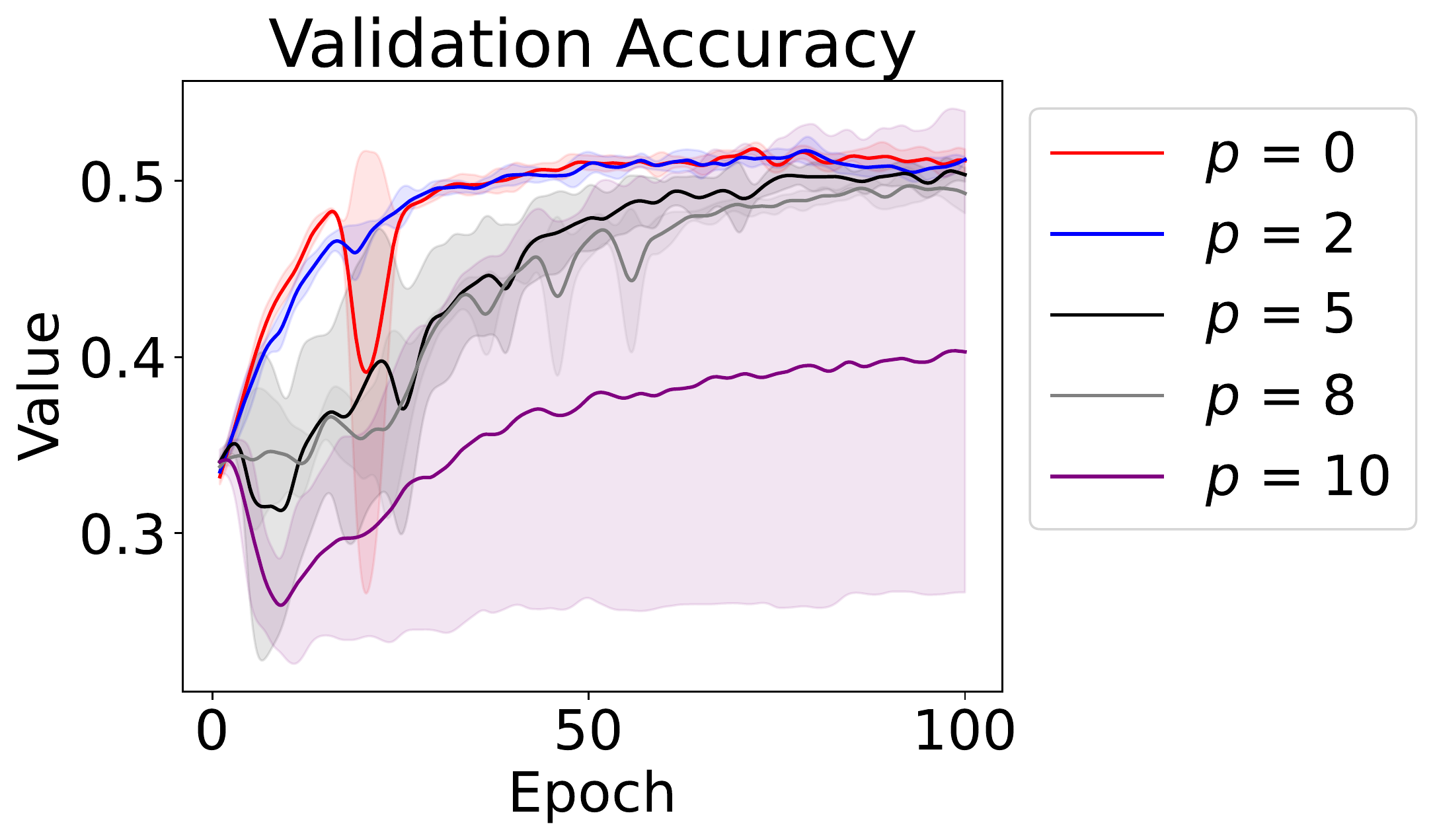}
  \caption{Validation}
  \label{fig:appendix_val_eta_trend}
\end{subfigure}
\begin{subfigure}{0.24\textwidth}
  \centering
  \includegraphics[width=\linewidth]{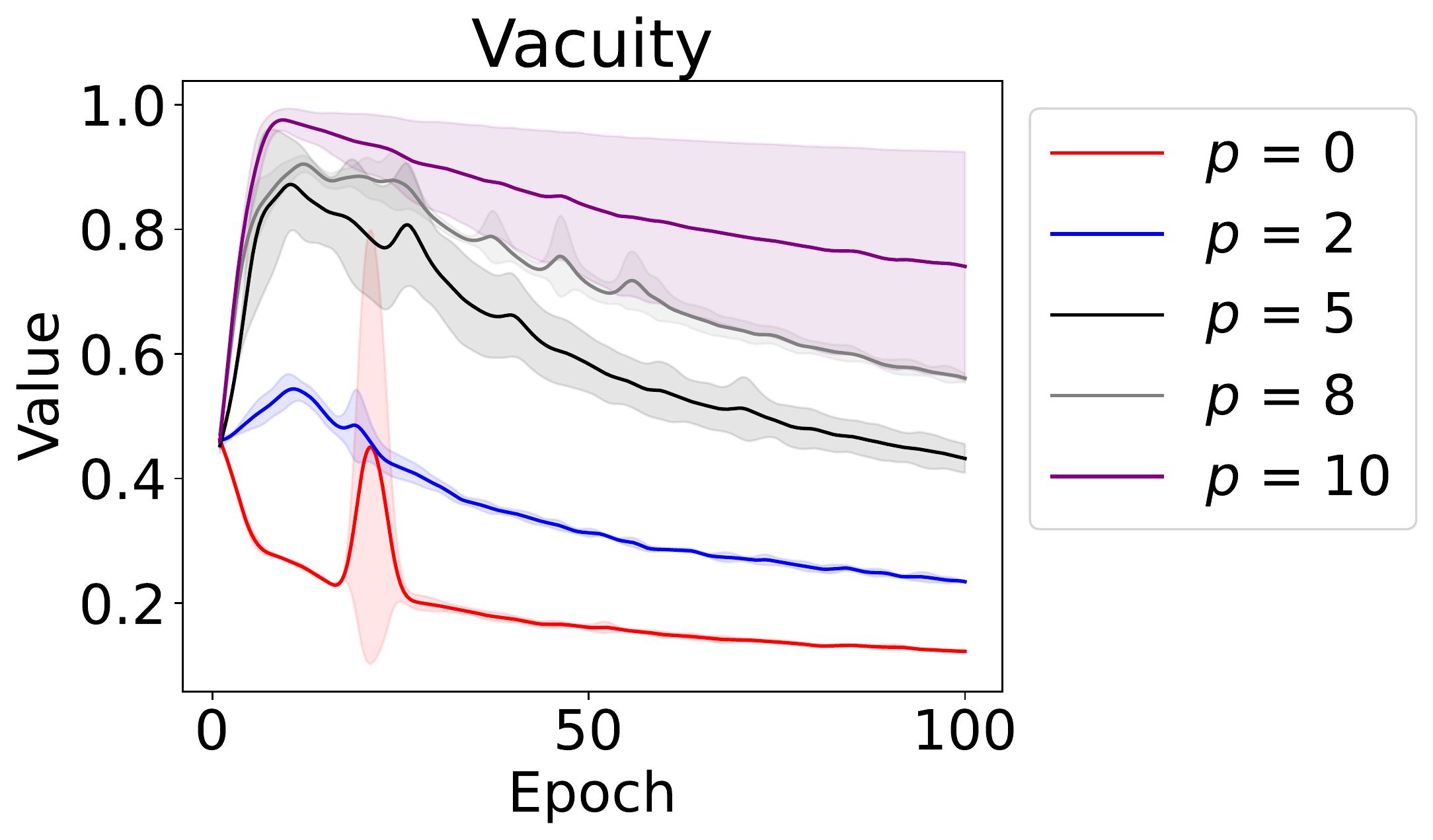}
  \caption{Vacuity}
\end{subfigure}
\begin{subfigure}{0.24\textwidth}
  \centering
  \includegraphics[width=\linewidth]{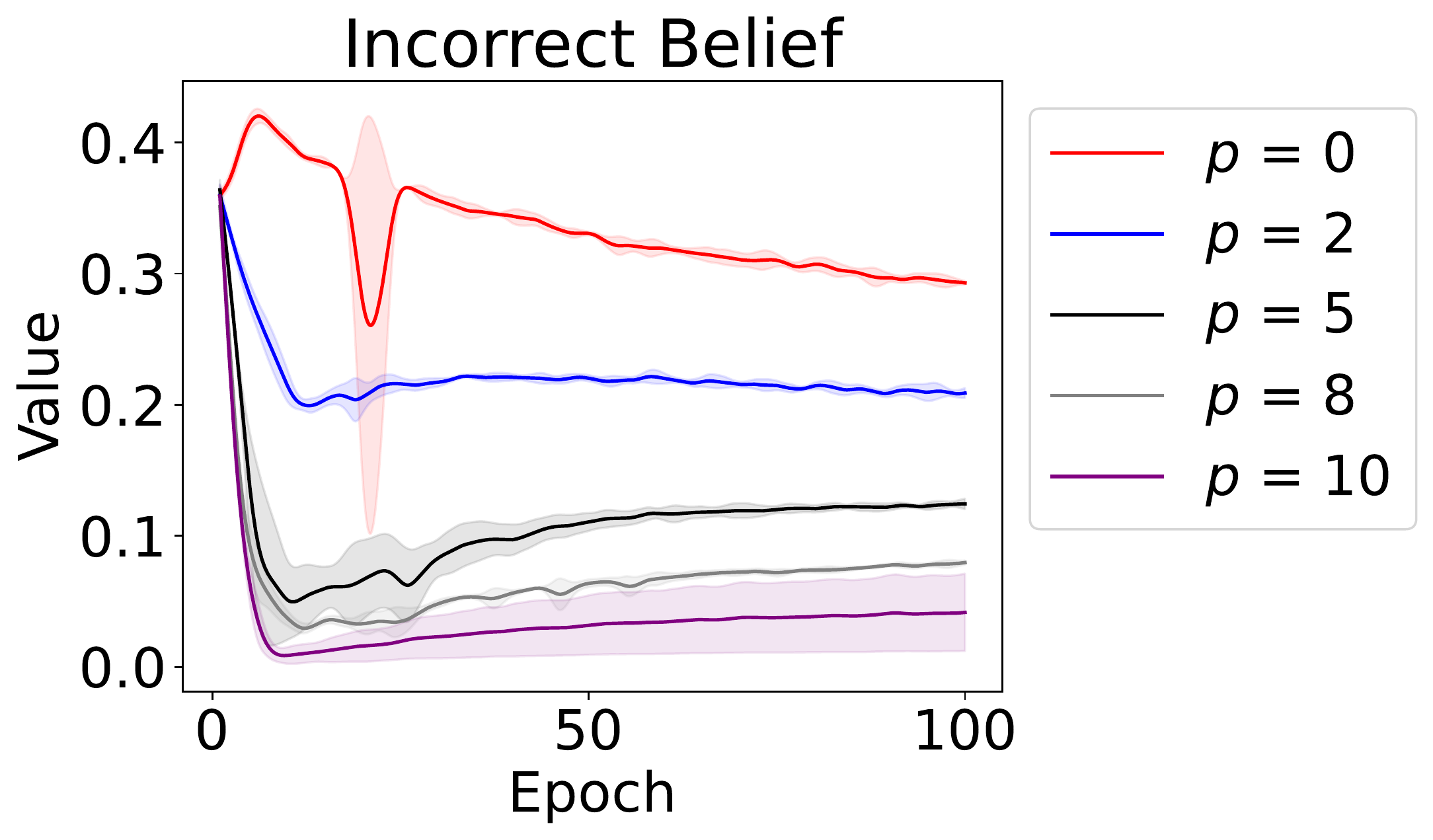}
  \caption{Incorrect Belief}
\end{subfigure}
\caption{
Impact of regularization on (a) Training Accuracy, (b) Validation Accuracy, (c) Training Vacuity, and (d)Training Incorrect Belief for 5-way 1-shot CifarFS experiment}
\label{fig:appendix_trend_eta_all}
\end{figure}

\begin{wrapfigure}{r}{0.25\textwidth}
\vspace{-10mm}
  \begin{center}
    \hspace{-0.5cm}\includegraphics[width=\linewidth]{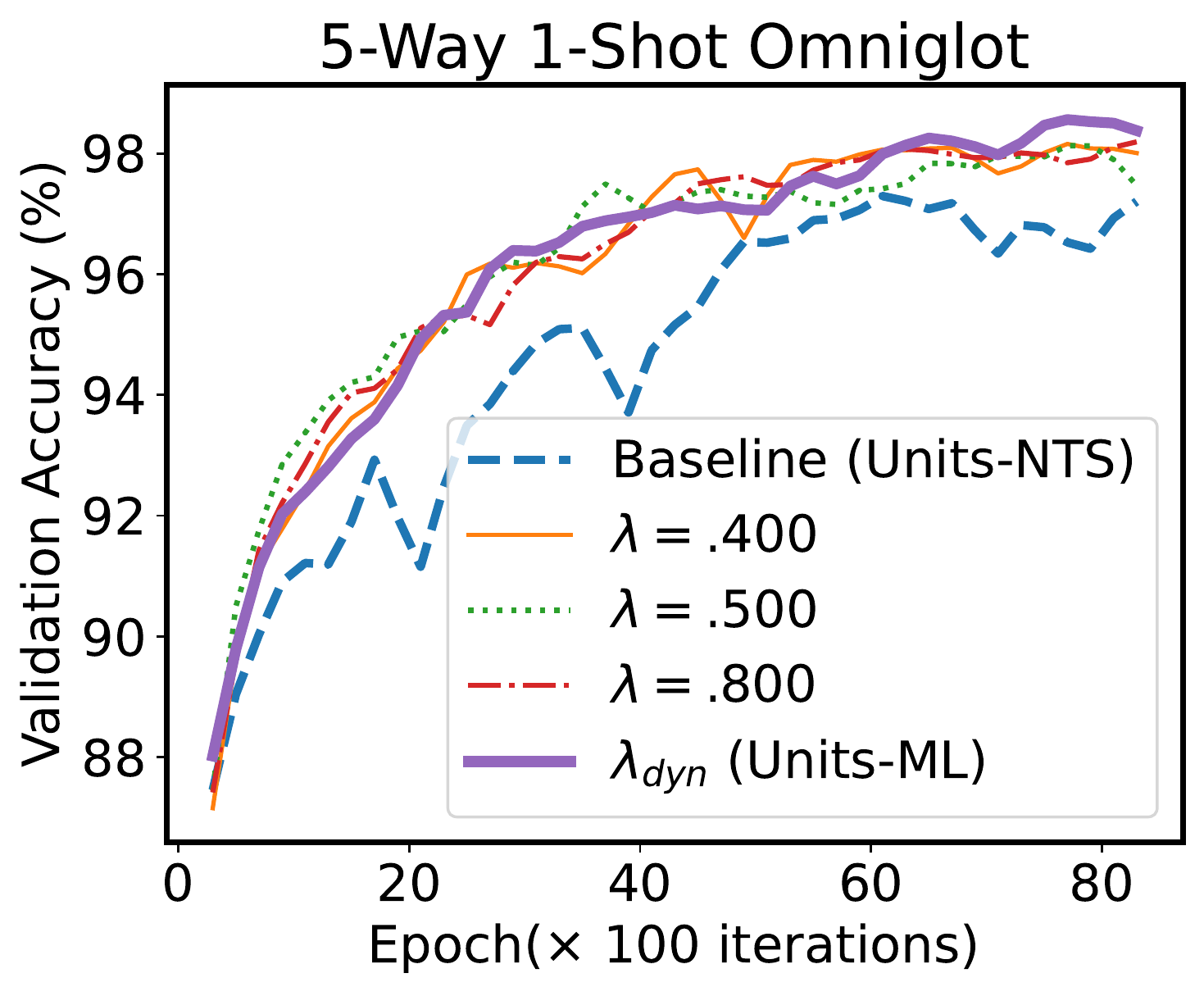}
  \end{center}
  \vspace{-6mm}
  \caption{Impact of $\lambda$}
  \label{fig:diffLambdaTrends}
\vspace{-9mm}
\end{wrapfigure}
\paragraph{Sensitivity to $\lambda$.} The model should focus on acquiring new knowledge (most vacuous tasks) at the initial phase, and as training progresses, transition to correct its acquired but incorrect knowledge. Thus, we set $\lambda$ heuristically to take a relatively large value and gradually decrease as training progresses. Specifically, in all Units-ML experiments,  we start with balancing term $\lambda$ value of $\lambda_{start}=0.99$ and dynamically adjust it  as $\lambda = \lambda_{start} - (\lambda_{start} - \lambda_{end}) \times \min(1.0, E/50)$ as training progresses to reach $\lambda_{end} = 0.5$ at the end of training. Since the vacuous belief ($vb^t$) also decreases as the model explores the task space, the performance is quite robust as shown in Figure~\ref{fig:diffLambdaTrends}, where we tested different $\lambda$ values on 5-way 1-shot tasks using the Omniglot dataset. 

\end{document}